\documentclass{article}

\title{Equilibrium Propagation: Bridging the Gap Between Energy-Based Models and Backpropagation}
\author{
  Benjamin Scellier and Yoshua Bengio\footnote{Y.B. is also a Senior Fellow of CIFAR}\\
  Universit\'{e} de Montr\'{e}al, Montreal Institute for Learning Algorithms
}

\usepackage[T1]{fontenc}      
\usepackage[utf8]{inputenc} 
\usepackage[english]{babel}

\usepackage{amssymb,amsthm,amsfonts,amscd}
\usepackage{amsmath}
\usepackage{mathtools}   
\usepackage{times}
\usepackage{textcomp}
\usepackage{fancyhdr}    
\usepackage{lastpage}
\usepackage{eurosym}     
\usepackage{comment}


\newcommand   \E{\mathbb E}





\newcommand{\abs}[1]{\left\lvert #1\right \rvert}                       
\newcommand{\norm}[1]{\left\lVert #1\right \rVert}                      
\newcommand{\set}[1]{\left\{#1\right\}}                                 







\newtheorem{prop}{Proposition}
\newtheorem{theo}[prop]{Theorem}
\newtheorem{lem}[prop]{Lemma}

\theoremstyle{remark}

\usepackage{listings}
\usepackage{color} 

\definecolor{mygreen}{rgb}{0,0.6,0}
\definecolor{mygray}{rgb}{0.5,0.5,0.5}
\definecolor{mymauve}{rgb}{0.58,0,0.82}

\lstset{ %
  title=\lstname,                  
  language=Scilab,                 
  backgroundcolor=\color{white},
  commentstyle=\color{mygreen},
  basicstyle=\footnotesize,        
  aboveskip=3mm,
  belowskip=3mm,
  columns=flexible,
  breakatwhitespace=true,          
  breaklines=true,                 
  captionpos=b,                    
  deletekeywords={...},            
  escapeinside={\%*}{*)},          
  extendedchars=true,              
  frame=single,                    
  keepspaces=true,                 
  keywordstyle=\color{blue},       
  morekeywords={*,...},            
  numbers=left,                    
  numbersep=5pt,                   
  numberstyle=\tiny\color{mygray}, 
  rulecolor=\color{black},         
  showspaces=false,                
  showstringspaces=false,          
  showtabs=false,                  
  stepnumber=2,                    
  stringstyle=\color{mymauve},     
  tabsize=2                        
}

\usepackage[margin=3cm]{caption}
\usepackage{makeidx} 
\usepackage{graphicx}
\usepackage{algorithm}
\usepackage{algorithmic}
\usepackage{natbib}
\usepackage{caption}
\graphicspath{{.},{img/}}

\renewcommand \v{\mathrm v}
\newcommand \x{\mathrm x}
\newcommand \y{\mathrm y}

\makeindex

\usepackage{vmargin}
\setlength{\parindent}{0cm}                                      
\setmarginsrb{2cm}{2cm}{2cm}{3cm}{0cm}{0.8cm}{0cm}{1cm}

\begin{document}

\maketitle

\abstract{
We introduce Equilibrium Propagation, a learning framework for energy-based models.
It involves only one kind of neural computation, performed
in both the first phase (when the prediction is made)
and the second phase of training (after the target or prediction error is revealed).
Although this algorithm computes the gradient of an objective function
just like Backpropagation, it does
not need a special computation or circuit for
the second phase, where errors are implicitly propagated.
Equilibrium Propagation shares similarities with Contrastive Hebbian Learning
and Contrastive Divergence while solving the theoretical issues of both algorithms:
our algorithm computes the gradient of a well defined objective function.
Because the objective function is defined in terms of local perturbations,
the second phase of Equilibrium Propagation corresponds to only nudging the prediction
(fixed point, or stationary distribution) towards a configuration that reduces prediction error.
In the case of a recurrent multi-layer supervised network,
the output units are slightly nudged towards their target in the second phase,
and the perturbation introduced at the output layer propagates backward in the hidden layers.
We show that the signal 'back-propagated' during this second phase
corresponds to the propagation of error derivatives and encodes the gradient of the objective function,
when the synaptic update corresponds to a standard form of spike-timing dependent plasticity.
This work makes it more plausible that a mechanism similar to Backpropagation could be implemented by brains,
since leaky integrator neural computation performs both inference and error back-propagation in our model.
The only local difference between the two phases is whether synaptic changes are allowed or not.
We also show experimentally that multi-layer recurrently connected networks with $1$, $2$ and $3$
hidden layers can be trained by Equilibrium Propagation on the permutation-invariant MNIST task.
}


\section{Introduction}

The Backpropagation algorithm to train neural networks is considered to be biologically implausible.
Among other reasons, one major reason is that Backpropagation requires a special computational circuit and a special kind of computation
in the second phase of training.
Here we introduce a new learning framework called Equilibrium Propagation, which requires only one computational circuit
and one type of computation for both phases of training.
Just like Backpropagation applies to any differentiable computational graph
(and not just a regular multi-layer neural network),
Equilibrium Propagation applies to a whole class of energy based models
(the prototype of which is the continuous Hopfield model).

In section \ref{sec:bio-backprop}, we revisit the continuous Hopfield model \citep{Hopfield84}
and introduce Equilibrium Propagation as a new framework to train it.
The model is driven by an energy function whose minima correspond to preferred states of the model.
At prediction time, inputs are clamped and
the network relaxes to a fixed point, corresponding to a local minimum of the energy function.
The prediction is then read out on the output units.
This corresponds to the first phase of the algorithm.
In the second phase of the training framework,
when the target values for output units are observed, the outputs are nudged
towards their targets and the network relaxes to a new but nearby
fixed point which corresponds to slightly smaller prediction error.
The learning rule, which is proved to perform gradient descent on the squared error,
is a kind of contrastive Hebbian learning rule
in which we {\em learn} (make more probable) the second-phase fixed point by reducing its energy
and {\em unlearn} (make less probable) the first-phase fixed point by increasing its energy.
However, our learning rule is not the usual contrastive Hebbian learning rule
and it also differs from Boltzmann machine learning rules, as discussed
in sections \ref{subsec:fn-vs-bm} and \ref{sec:diff-bml}.

During the second phase, the perturbation caused at the outputs propagates across hidden layers in the network.
Because the propagation goes from outputs backward in the network, it is better thought of as a 'back-propagation'.
It is shown by ~\citet{Bengio-arxiv2015,Bengio-et-al-NECO2017} that the early change of
neural activities in the second phase
corresponds to the propagation of error derivatives with respect to neural activities.
Our contribution in this paper is to go beyond the early change of neural activities
and to show that the second phase also implements the
(back)-propagation of error derivatives with respect to the synaptic weights, and
that this update corresponds to a form of spike-timing dependent plasticity,
using the results of~\citet{Bengio-et-al-NECO2017}.

In section \ref{sec:framework}, we present the general formulation of Equilibrium Propagation:
a new machine learning framework for energy-based models.
This framework applies to a whole class of energy based models,
which is not limited to the continuous Hopfield model but encompasses arbitrary
dynamics whose fixed points (or stationary distributions) correspond to minima
of an energy function.

In section \ref{sec:discussion}, we compare our algorithm to the existing learning algorithms for energy-based models.
The recurrent back-propagation algorithm introduced by \citet{Pineda87,Almeida87}
optimizes the same objective function as ours but it involves a different
kind of neural computation in the second phase of training, which is not satisfying from a biological perspective.
The contrastive Hebbian learning rule for continuous Hopfield nets described by \citet{Movellan-1990}
suffers from theoretical problems: learning may deteriorate
when the free phase and clamped phase land in different modes of the energy function.
The Contrastive Divergence algorithm \citep{Hinton2002} has theoretical issues too:
the ${\rm CD}_1$ update rule may cycle indefinitely \citep{sutskever2010convergence}.
The equivalence of back-propagation and contrastive Hebbian learning was shown by \citet{Xie+Seung-2003}
but at the cost of extra assumptions: their model requires
infinitesimal feedback weights and exponentially growing learning rates for remote layers.

Equilibrium Propagation solves all these theoretical issues at once.
Our algorithm computes the gradient of a sound objective function
that corresponds to local perturbations.
It can be realized with leaky integrator neural computation which
performs both {\em inference} (in the first phase)
and {\em back-propagation of error derivatives} (in the second phase).
Furthermore, we do not need extra hypotheses
such as those required by \citet{Xie+Seung-2003}.
Note that algorithms related to ours based on infinitesimal perturbations at the outputs
were also proposed by \citet{OReilly-1996,Hertz97}.

Finally, we show experimentally in section \ref{sec:experiments} that our model is trainable.
We train recurrent neural networks with 1, 2 and 3 hidden layers on MNIST and we achieve $0.00\%$ training error.
The generalization error lies between 2\% and 3\% depending on the architecture.
The code for the model is available\footnote{https://github.com/bscellier/Towards-a-Biologically-Plausible-Backprop}
for replicating and extending the experiments.


\section{The Continuous Hopfield Model Revisited: Equilibrium Propagation as a More Biologically Plausible Backpropagation}
\label{sec:bio-backprop}

In this section we revisit the continuous Hopfield model \citep{Hopfield84}
and introduce Equilibrium Propagation, a novel learning algorithm to train it.
Equilibrium Propagation is similar in spirit to Backpropagation in the sense that
it involves the propagation of a signal from output units to input units backward in the network,
during the second phase of training.
Unlike Backpropagation, Equilibrium Propagation requires only one kind of neural computations for both phases of training,
making it more biologically plausible than Backpropagation.
However, several points still need to be elucidated from a biological perspective.
Perhaps the most important of them is that the model described in this section requires symmetric weights,
a question discussed at the end of this paper.


\subsection{A Kind of Hopfield Energy}
\label{sec:supervised}

Previous work 
\citep{hinton1986learning,Friston+Stephan-2007,Berkes-et-al-2011} has hypothesized that,
given a state of sensory information, neurons are
collectively performing inference: they are moving towards configurations that
better 'explain' the observed sensory data. We can think of the neurons'
configuration as an
'explanation' (or 'interpretation') for the observed sensory data.
In the energy-based model presented here, that means
that the units of the network gradually move towards
lower energy configurations that are more probable, given the sensory input and
according to the current "model of the world" associated with the
parameters of the model.

We denote by $u$ the set of units of the network\footnote{For reasons of convenience, we use the same symbol $u$ to mean both the set of units and the value of those units}, and by $\theta=(W,b)$
the set of free parameters to be learned, which includes the synaptic weights $W_{ij}$ and the neuron biases $b_i$.
The units are continuous-valued and would correspond to averaged voltage potential across
time, spikes, and possibly neurons in the same minicolumn.
Finally, $\rho$ is a nonlinear activation function such that $\rho(u_i)$
represents the firing rate of unit $i$.

We consider the following energy function $E$, a kind of Hopfield energy,
first studied by \citet{Bengio-arxiv2015,Bengio-et-al-arxiv2015b,Bengio-et-al-arxiv2015,Bengio-et-al-NECO2017}:
\begin{equation}
	\label{eq:internal-potential}
	E(u) := \frac{1}{2} \sum_i u_i^2 - \frac{1}{2} \sum_{i \neq j} W_{ij} \rho(u_i) \rho(u_j) - \sum_i b_i \rho(u_i).
\end{equation}
Note that the network is recurrently connected with symmetric connections, that is $W_{ij} = W_{ji}$.
The algorithm presented here is applicable to any architecture (so long as connections are symmetric), even a fully connected network.
However, to make the connection to backpropagation more obvious, we will consider more specifically
a layered architecture with no skip-layer connections and no lateral connections within a layer (Figure \ref{fig:any_architecture}).

In the supervised setting studied here,
the units of the network are split in three sets: the inputs $\x$ which are always clamped
(just like in other models such as the conditional Boltzmann machine),
the hidden units $h$ (which may themselves be split in several layers) and the output units $y$.
We use the notation $\y$ for the targets, which should not be confused with the outputs $y$.
The set of all units in the network is $u=\set{\x,h,y}$.
As usual in the supervised learning scenario,
the output units $y$ aim to replicate their targets $\y$.
The discrepancy between the output units $y$ and the targets $\y$ is measured by the quadratic cost function
\begin{equation}
	\label{eq:external-potential}
	C := \frac{1}{2}\norm{y-\y}^2.
\end{equation}
The cost function $C$ also acts as an external potential energy for the output units,
which can drive them towards their target.
A novelty in our work, with respect to previously studied energy-based models,
is that we introduce the 'total energy function' $F$, which takes the form
\begin{equation}
	\label{eq:total-energy}
	F := E + \beta C,
\end{equation}
where $\beta \geq 0$ is a real-valued scalar that controls whether the output $y$
is pushed towards the target $\y$ or not, and by how much.
We call $\beta$ the 'influence parameter' or 'clamping factor'.
The total energy $F$ is the sum of two potential energies: the internal potential $E$ that models the interactions within the network,
and the external potential $\beta C$ that models how the targets influence the output units.
Contrary to Boltzmann Machines where the visible units are either free or (fully) clamped,
here the real-valued parameter $\beta$ allows the output units to be \textit{weakly clamped}.


\subsection{The Neuronal Dynamics}
\label{subsec:leaky}

\begin{figure}[ht]
	\begin{center}
		\centerline{
		\includegraphics[width=0.45\textwidth]{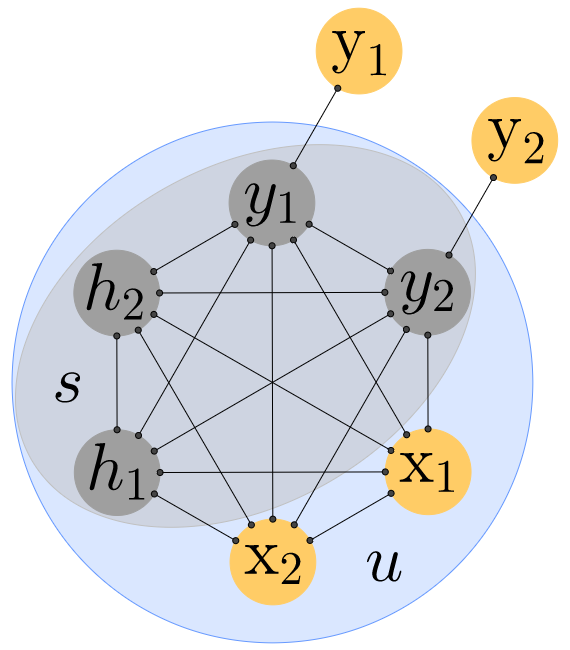}
		$\qquad$
		\includegraphics[width=0.45\textwidth]{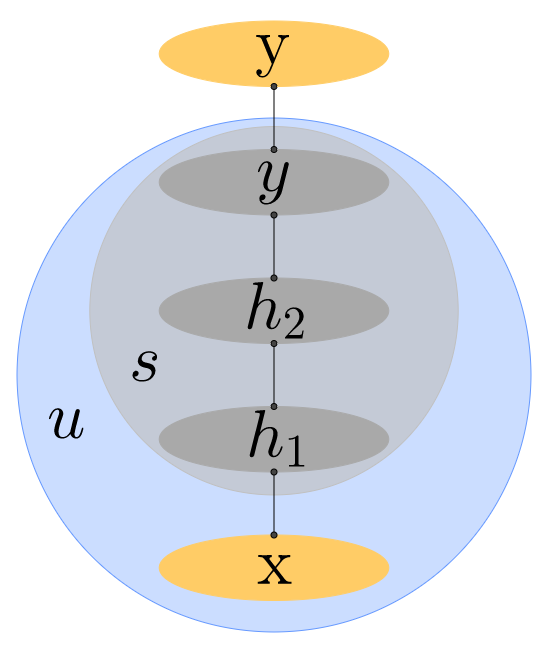}
		}
		\caption{The input units $\x$ are always clamped. The state variable $s$ includes the hidden units $h$ and output units $y$. The targets are denoted by $\y$.
		The network is recurrently connected with symmetric connections.
		\textbf{Left.} Equilibrium Propagation applies to any architecture, even a fully connected network. \textbf{Right.} The connection with Backpropagation is more obvious when the network has a layered architecture.}
		\label{fig:any_architecture}
	\end{center}
\end{figure}

We denote the state variable of the network by $s=\set{h,y}$ which does not include
the input units $\x$ since they are always clamped.
We assume that the time evolution of the state variable $s$ is governed by the gradient dynamics
\begin{equation}
	\label{eq:gradient-system}
   \frac{ds}{dt} = - \frac{\partial F}{\partial s}.
\end{equation}
Unlike more conventional artificial neural networks, the model studied here
is a continuous-time dynamical system described by the differential equation of motion Eq.~\ref{eq:gradient-system}.
The total energy of the system decreases as time progresses ($\theta$, $\beta$, $\x$ and $\y$ being fixed) since
\begin{equation}
  \frac{dF}{dt} = \frac{\partial F}{\partial s} \cdot \frac{ds}{dt} = - \norm{\frac{ds}{dt}}^2 \leq 0.
\end{equation}
The energy stops decreasing when the network has reached a fixed point:
\begin{equation}
	\label{eq:dE-ds=0}
  \frac{dF}{dt}=0 \qquad \Leftrightarrow \qquad \frac{ds}{dt}=0 \qquad \Leftrightarrow \qquad \frac{\partial F}{\partial s}=0.
\end{equation}

The differential equation of motion Eq.~\ref{eq:gradient-system} can be seen as a sum of two 'forces' that act on the temporal derivative of $s$:
\begin{equation}
   \frac{ds}{dt} = - \frac{\partial E}{\partial s} - \beta \frac{\partial C}{\partial s}.
\end{equation}
The 'internal force' induced by the internal potential (the Hopfield energy, Eq.~\ref{eq:internal-potential}) on the $i$-th unit is
\begin{equation}
	\label{eq:leaky-integrator}
  - \frac{\partial E}{\partial s_i} = \rho'(s_i) \left( \sum_{j \neq i} W_{ij} \rho(u_j) + b_i \right) - s_i,
\end{equation}
while the 'external force' induced by the external potential (Eq.~\ref{eq:external-potential}) on $h_i$ and $y_i$ is respectively
\begin{equation}
	\label{eq:external-force}
	- \beta \frac{\partial C}{\partial h_i} = 0 \qquad \text{and} \qquad
	- \beta \frac{\partial C}{\partial y_i} = \beta ( \y_i-y_i ).
\end{equation}

The form of Eq.~\ref{eq:leaky-integrator} is reminiscent of a leaky integrator neuron model,
in which neurons are seen as performing leaky temporal integration of their past inputs.
Note that the hypothesis of symmetric connections ($W_{ij} = W_{ji}$)
was used to derive Eq.~\ref{eq:leaky-integrator}.
As discussed in \cite{Bengio-arxiv2015}, the factor $\rho'(s_i)$ would suggest that
when a neuron is saturated (firing at the maximal or minimal
rate so that $\rho'(s_i) \approx 0$), its state is not sensitive to external inputs,
while the leaky term drives it out of the saturation regime,
towards its resting value $s_i=0$.

The form of Eq.~\ref{eq:external-force} suggests that
when $\beta=0$, the output units are not sensitive to the outside world $\y$.
In this case we say that the network is in the \textit{free phase} (or first phase).
On the contrary, when $\beta > 0$, the 'external force'
drives the output unit $y_i$ towards the target $\y_i$.
In this case, we say that the network is in the \textit{weakly clamped phase} (or second phase).

Finally, a more likely dynamics would include some form of noise.
The notion of fixed point is then replaced by that of stationary distribution.
In Appendix \ref{appendix:stochastic-framework}, we present a stochastic framework that naturally extends
the analysis presented here.


\subsection{Free Phase, Weakly Clamped Phase and Backpropagation of Errors}
\label{sec:backprop}

In the first phase of training, the inputs are clamped and $\beta=0$ (the output units are free).
We call this phase the \textit{free phase} and the state towards which the network converges is the \textit{free fixed point} $u^0$.
The prediction is read out on the output units $y$ at the fixed point.

In the second phase (which we call \textit{weakly clamped phase}), the influence parameter $\beta$ is changed to a small positive value $\beta>0$
and the novel term $\beta C$ added to the energy function (Eq.~\ref{eq:total-energy})
induces a new 'external force' that acts on the output units (Eq.~\ref{eq:external-force}).
This force models the observation of $\y$:
it nudges the output units from their
free fixed point value in the direction of their target.
Since this force only acts on the output units,
the hidden units are initially at equilibrium
at the beginning of the weakly clamped phase,
but the perturbation caused at the output units will propagate in the hidden units
as time progresses.
When the architecture is a multi-layered net
(Figure \ref{fig:any_architecture}. Right), the perturbation at the output layer
propagates backwards across the hidden layers of the network.
This propagation is thus better thought of as a 'back-propagation'.
The net eventually settles to a new fixed point (corresponding to the new positive value of $\beta$) which we call
\textit{weakly clamped fixed point} and denote by $u^\beta$.

Remarkably, the perturbation that is (back-)propagated during the second phase corresponds to the propagation of error derivatives.
It was first shown by~\citet{Bengio-arxiv2015} that, starting from the free fixed point,
the early changes of neural activities during the weakly clamped phase
(caused by the output units moving towards their target)
approximate some kind of error derivatives with respect to the layers' activities.
They considered a regular multi-layer neural network with no skip-layer connections
and no lateral connections within a layer.

In this paper, we show that the weakly clamped phase also implements the
(back)-propagation of error derivatives with respect to the synaptic weights.
In the limit $\beta \to 0$, the update rule
\begin{equation}
	\label{eq:chl}
	\Delta W_{ij} \propto \frac{1}{\beta} \left( \rho \left( u^\beta_i \right) \rho \left( u^\beta_j \right) - \rho \left( u^0_i \right) \rho \left( u^0_j \right) \right)
\end{equation}
gives rise to stochastic gradient descent on $J := \frac{1}{2}\norm{y^0-\y}^2$, where $y^0$ is the state of the output units at the free fixed point.
We will state and prove this theorem in a more general setting in section \ref{sec:framework}.
In particular, this result holds for any architecture
and not just a layered architecture (Figure \ref{fig:any_architecture})
like the one considered by ~\citet{Bengio-arxiv2015}.

The learning rule Eq.~\ref{eq:chl} is a kind of contrastive Hebbian learning rule,
somewhat similar to the one studied by \citet{Movellan-1990} and the Boltzmann machine learning rule.
The differences with these algorithms will be discussed in section \ref{sec:discussion}.

We call our learning algorithm Equilibrium Propagation.
In this algorithm, leaky integrator neural computation (as described in section \ref{subsec:leaky}),
performs both {\em inference} (in the free phase)
and {\em error back-propagation} (in the weakly clamped phase).


\subsection{Connection to Spike-Timing Dependent Plasticity}
\label{sec:stdp}

Spike-Timing Dependent Plasticity (STDP) is believed to be a prominent form of
synaptic change in neurons \citep{Markram+Sakmann-1995,Gerstner-et-al-1996}, and
see ~\citet{Markram-et-al-2012} for a review.

The STDP observations relate the expected change in synaptic
weights to the timing difference between postsynaptic spikes and
presynaptic spikes. This is the result of experimental observations
in biological neurons, but its role as part of a learning algorithm remains a topic
where more exploration is needed.
Here is an attempt in this direction.

Experimental results by \citet{Bengio-et-al-arxiv2015b} show that if the
temporal derivative of the synaptic weight $W_{ij}$ satisfies
\begin{equation}
	\label{eq:dWij=rhoi-dsj-dt}
	\frac{dW_{ij}}{dt} \propto \rho(u_i) \frac{d u_j}{dt},
\end{equation}
then one recovers the experimental observations by
\citet{Bi+Poo-2001} in biological neurons.
\citet{Xie+Seung-NIPS1999} have studied the learning rule
\begin{equation}
	\label{eq:dWij=rhoi-drhoj-dt}
	\frac{dW_{ij}}{dt} \propto \rho(u_i) \frac{d \rho(u_j)}{dt}.
\end{equation}
Note that the two rules Eq.~\ref{eq:dWij=rhoi-dsj-dt} and Eq.~\ref{eq:dWij=rhoi-drhoj-dt}
are the same up to a factor $\rho'(u_j)$.
An advantage of Eq.~\ref{eq:dWij=rhoi-drhoj-dt} is that
it leads to a more natural view of the update rule
in the case of tied weights studied here ($W_{ij}=W_{ji}$).
Indeed, the update should take into account the pressures from both the $i$ to $j$ and $j$ to $i$ synapses,
so that the total update under constraint is
\begin{equation}
	\label{eq:dWij=d-rhoi-rhoj}
	\frac{d W_{ij}}{dt} \propto \rho(u_i) \frac{d \rho(u_j)}{dt} + \rho(u_j) \frac{d \rho(u_i)}{dt} = \frac{d}{dt} \rho(u_i) \rho(u_j).
\end{equation}
By integrating Eq.~\ref{eq:dWij=d-rhoi-rhoj}
on the path from the free fixed point $u^0$ to the weakly clamped fixed point $u^\beta$
during the second phase, we get
\begin{equation}
	\Delta W_{ij} \propto \int \frac{d W_{ij}}{dt} dt
	= \int \frac{d}{dt} \rho(u_i) \rho(u_j) dt
	= \int d \left( \rho(u_i) \rho(u_j) \right)
	= \rho \left( u_i^\beta \right) \rho \left( u_j^\beta \right) - \rho \left( u_i^0 \right) \rho \left( u_j^0 \right),
\end{equation}
which is the same learning rule as Eq.~\ref{eq:chl} up to a factor $1/\beta$.
Therefore the update rule Eq.~\ref{eq:chl} can be interpreted as
a continuous-time integration of Eq.~\ref{eq:dWij=rhoi-drhoj-dt},
in the case of symmetric weights, on the path from $u^0$ to $u^\beta$ during the second phase.

\bigskip

We propose two possible interpretations for the synaptic plasticity in our model.

{\bf First view.}
In the first phase, a anti-Hebbian update occurs at the free fixed point
$\Delta W_{ij} \propto -\rho \left( u_i^0 \right) \rho \left( u_j^0 \right)$.
In the second phase, a Hebbian update occurs at the weakly-clamped fixed point
$\Delta W_{ij} \propto +\rho \left( u_i^\beta \right) \rho \left( u_j^\beta \right)$.

{\bf Second view.}
In the first phase, no synaptic update occurs: $\Delta W_{ij} = 0$.
In the second phase, when the neurons' state move from the free fixed point $u^0$ to the weakly-clamped fixed point $u^\beta$,
the synaptic weights follow the "tied version" of the continuous-time update rule
$\frac{d W_{ij}}{dt} \propto \frac{d}{dt} \rho(u_i) \frac{d \rho(u_j)}{dt} + \rho(u_j) \frac{d \rho(u_i)}{dt}$.


\section{A Machine Learning Framework for Energy Based Models}
\label{sec:framework}

In this section we generalize the setting presented in section \ref{sec:bio-backprop}.
We lay down the basis for a new machine learning framework for energy-based models,
in which Equilibrium Propagation plays a role analog to Backpropagation in computational graphs
to compute the gradient of an objective function.
Just like the Multi Layer Perceptron is the prototype of computational graphs in which Backpropagation is applicable,
the continuous Hopfield model presented in section \ref{sec:bio-backprop} appears to be the prototype of models
which can be trained with Equilibrium Propagation.

In our new machine learning framework, the central object is the total energy function $F$:
all quantities of interest (fixed points, cost function, objective
function, gradient formula) can be defined or formulated directly in terms of $F$.

Besides, in our framework, the 'prediction' (or fixed point)
is defined \textit{implicitly} in terms of the data point and the parameters of the model,
rather than \textit{explicitly} (like in a computational graph).
This implicit definition makes applications on digital hardware (such as GPUs)
less practical as it needs long inference phases involving numerical
optimization of the energy function.
But we expect that this framework could be very efficient if
implemented by analog circuits, as suggested by \citet{Hertz97}.

The framework presented in this section is deterministic, but a natural extension
to the stochastic case is presented in Appendix \ref{appendix:stochastic-framework}.


\subsection{Training Objective}
\label{sec:training-objective}

In this section, we present the general framework
while making sure to be consistent with the notations and terminology introduced in section \ref{sec:bio-backprop}.
We denote by $s$ the state variable of the network,
$\v$ the state of the external world (i.e. the data point being processed),
and $\theta$ the set of free parameters to be learned.
The variables $s$, $\v$ and $\theta$ are real-valued vectors.
The state variable $s$ spontaneously moves towards low-energy configurations of an energy function $E(\theta,\v,s)$.
Besides that, a cost function $C(\theta,\v,s)$ measures how 'good' is a state is.
The goal is to make low-energy configurations have low cost value.

For fixed $\theta$ and $\v$, we denote by $s_{\theta,\v}^0$ a local minimum of $E$,
also called fixed point, which corresponds to the 'prediction' from the model:
\begin{equation}
	s_{\theta,\v}^0 \in \underset{s}{\arg \min} \; E(\theta,\v,s).
\end{equation}
Here we use the notation ${\arg \min}$ to refer to the set of local minima.
The objective function that we want to optimize is
\begin{equation}
	\label{eq:objective-function}
	J(\theta,\v) := C \left( \theta,\v,s_{\theta,\v}^0 \right).
\end{equation}
Note the distinction between the cost function $C$ and the objective function $J$:
the cost function is defined for any state $s$,
whereas the objective function is the cost associated to the fixed point $s_{\theta,\v}^0$.

Now that the objective function has been introduced, we define the training objective (for a single data point $\v$) as
\begin{equation}
	\text{find} \quad \underset{\theta}{\arg \min} \; J(\theta,\v).
\end{equation}
A formula to compute the gradient of $J$ will be given in section \ref{sec:equi-prop} (Theorem \ref{thm:deterministic}).
Equivalently, the training objective can be reformulated as the following constrained optimization problem:
\begin{align}
	\text{find}       & \quad \underset{\theta,s}{\arg \min} \; C(\theta,\v,s) \\
	\text{subject to} & \quad \frac{\partial E}{\partial s}(\theta,\v,s) = 0,
\end{align}
where the constraint $\frac{\partial E}{\partial s}(\theta,\v,s) = 0$ is the fixed point condition.
For completeness, a solution to this constrained optimization problem is given in Appendix \ref{appendix:constrained-optimization} as well.
Of course, both formulations of the training objective lead to the same gradient update on $\theta$.

Note that, since the cost $C(\theta,\v,s)$ may depend on $\theta$, it can include a regularization term of the form $\lambda \; \Omega \left( \theta \right)$,
where $\Omega \left( \theta \right)$ is a $L_1$ or $L_2$ norm penalty for example.

In section \ref{sec:bio-backprop}
we had $s=\set{h,y}$ for the state variable,
$\v=\set{\x,\y}$ for the state of the outside world,
$\theta=(W,b)$ for the set of learned parameters,
and the energy function $E$ and cost function $C$ were of the form
$E \left( \theta,\v,s \right) = E \left( \theta,\x,h,y \right)$ and
$C \left( \theta,\v,s \right) = C \left( y,\y \right)$.


\subsection{Total Energy Function}
\label{sec:total-energy}

Following section \ref{sec:bio-backprop}, we introduce the total energy function
\begin{equation}
	\label{eq:tot-en}
	F(\theta,\v,\beta,s) := E(\theta,\v,s) + \beta \; C(\theta,\v,s),
\end{equation}
where $\beta$ is a real-valued scalar called 'influence parameter'.
Then we extend the notion of fixed point for any value of $\beta$.
The fixed point (or energy minimum), denoted by $s_{\theta,\v}^\beta$, is characterized by
\begin{equation}
	\label{eq:fx-pt-1}
  \frac{\partial F}{\partial s} \left( \theta,\v,\beta,s_{\theta,\v}^\beta \right) = 0
\end{equation}
and $\frac{\partial^2 F}{\partial s^2} \left( \theta,\v,\beta,s_{\theta,\v}^\beta \right)$ is a symmetric positive definite matrix.
Under mild regularity conditions on $F$, the implicit function theorem ensures that, for fixed $\v$,
the funtion $(\theta,\beta) \mapsto s_{\theta,\v}^\beta$ is differentiable.


\subsection{The Learning Algorithm: Equilibrium Propagation}
\label{sec:equi-prop}

\begin{theo}[Deterministic version]
	\label{thm:deterministic}
	The gradient of the objective function with respect to $\theta$ is given by the formula
	\begin{equation}
		\frac{\partial J}{\partial \theta}(\theta,\v) =
		\lim_{\beta \to 0} \frac{1}{\beta} \left( \frac{\partial F}{\partial \theta} \left( \theta,\v,\beta,s_{\theta,\v}^\beta \right) - \frac{\partial F}{\partial \theta} \left( \theta,\v,0,s_{\theta,\v}^0 \right) \right),
	\end{equation}
	or equivalently
	\begin{equation}
		\frac{\partial J}{\partial \theta}(\theta,\v) =
		\frac{\partial C}{\partial \theta} \left( \theta,\v,s_{\theta,\v}^0 \right) +
		\lim_{\beta \to 0} \frac{1}{\beta} \left( \frac{\partial E}{\partial \theta} \left( \theta,\v,s_{\theta,\v}^\beta \right) - \frac{\partial E}{\partial \theta} \left( \theta,\v,s_{\theta,\v}^0 \right) \right).
	\end{equation}
\end{theo}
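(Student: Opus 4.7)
My approach is to define the auxiliary function
\[
  G(\theta,\beta) := F\bigl(\theta,\v,\beta,s_{\theta,\v}^\beta\bigr),
\]
compute its two first-order partial derivatives by an envelope-theorem-style calculation, and then exploit the equality of mixed partials (Schwarz) to identify the limit on the right-hand side of the theorem with $\partial J/\partial \theta$. Throughout I use the standing assumption from Section~\ref{sec:total-energy} that $(\theta,\beta)\mapsto s^\beta_{\theta,\v}$ is $C^1$ (this is exactly what the implicit function theorem buys us from the fixed-point equation~\eqref{eq:fx-pt-1} and the positive-definiteness of $\partial^2 F/\partial s^2$).

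\textbf{Step 1: envelope identities for $G$.} Differentiating $G$ in $\theta$ gives
\[
  \frac{\partial G}{\partial \theta}
  = \frac{\partial F}{\partial \theta}\bigl(\theta,\v,\beta,s_{\theta,\v}^\beta\bigr)
  + \frac{\partial F}{\partial s}\bigl(\theta,\v,\beta,s_{\theta,\v}^\beta\bigr)\cdot \frac{\partial s_{\theta,\v}^\beta}{\partial \theta}.
\]
The fixed-point condition~\eqref{eq:fx-pt-1} kills the second term, so $\partial G/\partial \theta = \partial F/\partial \theta(\theta,\v,\beta,s_{\theta,\v}^\beta)$. An identical calculation with respect to $\beta$, together with $\partial F/\partial \beta = C$, yields $\partial G/\partial \beta = C(\theta,\v,s_{\theta,\v}^\beta)$.

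\textbf{Step 2: swap mixed partials.} Under the regularity assumed on $F$ and on $s_{\theta,\v}^\beta$, the map $G$ is $C^2$ and Schwarz's theorem applies:
\[
  \frac{\partial}{\partial \beta}\frac{\partial G}{\partial \theta}
  = \frac{\partial}{\partial \theta}\frac{\partial G}{\partial \beta}.
\]
Evaluating at $\beta=0$ and substituting the envelope expressions from Step~1 yields
\[
  \frac{\partial}{\partial \beta}\biggr|_{\beta=0} \frac{\partial F}{\partial \theta}\bigl(\theta,\v,\beta,s_{\theta,\v}^\beta\bigr)
  = \frac{\partial}{\partial \theta} C\bigl(\theta,\v,s_{\theta,\v}^0\bigr).
\]
The right-hand side is just $\partial J/\partial \theta$ by the definition $J(\theta,\v)=C(\theta,\v,s_{\theta,\v}^0)$, while the left-hand side, rewritten as a difference quotient, equals
\[
  \lim_{\beta\to 0}\frac{1}{\beta}\left(\frac{\partial F}{\partial \theta}\bigl(\theta,\v,\beta,s_{\theta,\v}^\beta\bigr)
  - \frac{\partial F}{\partial \theta}\bigl(\theta,\v,0,s_{\theta,\v}^0\bigr)\right),
\]
since $s_{\theta,\v}^0$ is the fixed point at $\beta=0$. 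This proves the first formula.

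\textbf{Step 3: second formula.} Using $F = E + \beta C$, one has $\partial F/\partial \theta = \partial E/\partial \theta + \beta\,\partial C/\partial \theta$; plugging this into the first formula, the $\beta\,\partial C/\partial \theta$ term contributes $\partial C/\partial \theta(\theta,\v,s_{\theta,\v}^0)$ in the limit, and the remaining $\partial E/\partial \theta$ pieces give exactly the stated $\lim_{\beta\to 0}\beta^{-1}(\partial E/\partial \theta(s^\beta)-\partial E/\partial \theta(s^0))$. The main subtlety is purely analytic rather than conceptual: one has to justify interchanging differentiation in $\beta$ with evaluation at the (moving) fixed point $s_{\theta,\v}^\beta$, which is exactly why $C^1$ dependence of $s_{\theta,\v}^\beta$ on $\beta$ matters — and this in turn rests on the invertibility of $\partial^2 F/\partial s^2$ at the fixed point noted after~\eqref{eq:fx-pt-1}. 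Once that regularity is in hand, everything above is a one-line calculation via the envelope theorem plus Schwarz.
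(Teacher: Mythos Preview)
Your proof is correct and follows essentially the same route as the paper: define $G(\theta,\beta)=F(\theta,\v,\beta,s_{\theta,\v}^\beta)$, use the fixed-point condition as an envelope identity to reduce $\partial G/\partial\theta$ and $\partial G/\partial\beta$ to partial derivatives of $F$, and then equate the mixed second partials of $G$ and evaluate at $\beta=0$. The paper packages the envelope/Schwarz step as a separate lemma before specializing to $\beta=0$, but the argument is the same.
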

Theorem \ref{thm:deterministic} will be proved in Appendix \ref{appendix:theorem}.
Note that the parameter $\beta$ in Theorem \ref{thm:deterministic} need not be positive (We only need $\beta \to 0$).
Using the terminology introduced in section \ref{sec:bio-backprop}, we call
$s_{\theta,\v}^0$ the free fixed point, and $s_{\theta,\v}^\beta$ the nudged fixed point
(or weakly-clamped fixed point in the case $\beta > 0$).
Moreover, we call a free phase (resp. nudged phase, or weakly-clamped phase)
a procedure that yields a free fixed point (resp. nudged fixed point, or weakly-clamped fixed point)
by minimizing the energy function $F$ with respect to $s$, for $\beta = 0$ (resp. $\beta \neq 0$).
Theorem \ref{thm:deterministic} suggests the following two-phase training procedure. Given a data point $\v$:
\begin{enumerate}
	\item Run a free phase until the system settles to a free fixed point $s_{\theta,\v}^0$ and collect $\frac{\partial F}{\partial \theta} \left( \theta,\v,0,s_{\theta,\v}^0 \right)$.
	\item Run a nudged phase for some $\beta \neq 0$ such that $\abs{\beta}$ is "small", until the system settles to a nudged fixed point $s_{\theta,\v}^\beta$ and collect $\frac{\partial F}{\partial \theta} \left( \theta,\v,\beta,s_{\theta,\v}^\beta \right)$.
	\item Update the parameter $\theta$ according to
		\begin{equation}
			\Delta \theta \propto - \frac{1}{\beta} \left( \frac{\partial F}{\partial \theta} \left( \theta,\v,\beta,s_{\theta,\v}^\beta \right) - \frac{\partial F}{\partial \theta} \left( \theta,\v,0,s_{\theta,\v}^0 \right) \right).
		\end{equation}
\end{enumerate}
Consider the case $\beta > 0$.
Starting from the free fixed point $s_{\theta,\v}^0$ (which corresponds to the 'prediction'),
a small change of the parameter $\beta$ (from the value $\beta=0$ to a value $\beta>0$) causes
slight modifications in the interactions in the network.
This small perturbation makes the network settle to a nearby weakly-clamped fixed point $s_{\theta,\v}^\beta$.
Simultaneously, a kind of contrastive update rule for $\theta$ is happening,
in which the energy of the free fixed point is increased
and the energy of the weakly-clamped fixed point is decreased.
We call this learning algorithm Equilibrium Propagation.

\bigskip

Note that in the setting introduced in section \ref{sec:supervised}
the total energy function (Eq.~\ref{eq:total-energy}) is such that
$\frac{\partial F}{\partial W_{ij}} = - \rho(u_i) \rho(u_j)$, in agreement with Eq.~\ref{eq:chl}.
In the weakly clamped phase, the novel term $\frac{1}{2}\beta \norm{y-\y}^2$ added to the energy $E$ (with $\beta>0$)
slightly attracts the output state $y$ to the target $\y$.
Clearly, the weakly clamped fixed point is better than the free fixed point in terms of prediction error.
The following proposition generalizes this property to the general setting.

\begin{prop}[Deterministic version]
	\label{prop:deterministic}
	The derivative of the function
	\begin{equation}
		\beta \mapsto C \left( \theta,\v,s_{\theta,\v}^\beta \right)
	\end{equation}
	at $\beta=0$ is non-positive.
\end{prop}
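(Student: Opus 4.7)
The plan is to differentiate the map $\beta \mapsto C(\theta, \v, s_{\theta,\v}^\beta)$ via the chain rule, then use the implicit function theorem on the fixed-point condition to express $\partial s_{\theta,\v}^\beta / \partial \beta$ in terms of the Hessian of $F$. The positive-definiteness of that Hessian at a local minimum will then force the derivative of $C$ to be non-positive.

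Concretely, I would first write
\begin{equation}
\frac{d}{d\beta} C\bigl(\theta,\v,s_{\theta,\v}^\beta\bigr)
= \frac{\partial C}{\partial s}\bigl(\theta,\v,s_{\theta,\v}^\beta\bigr) \cdot \frac{\partial s_{\theta,\v}^\beta}{\partial \beta},
\end{equation}
valid by the differentiability of $\beta \mapsto s_{\theta,\v}^\beta$ guaranteed by the implicit function theorem, as remarked just before the statement of Theorem~\ref{thm:deterministic}. Next I would differentiate the fixed-point condition \eqref{eq:fx-pt-1} with respect to $\beta$. Since $F = E + \beta C$, we have $\partial^2 F / \partial s \partial \beta = \partial C / \partial s$, so implicit differentiation gives
\begin{equation}
\frac{\partial C}{\partial s}\bigl(\theta,\v,s_{\theta,\v}^\beta\bigr) + \frac{\partial^2 F}{\partial s^2}\bigl(\theta,\v,\beta,s_{\theta,\v}^\beta\bigr) \cdot \frac{\partial s_{\theta,\v}^\beta}{\partial \beta} = 0,
\end{equation}
and hence
\begin{equation}
\frac{\partial s_{\theta,\v}^\beta}{\partial \beta} = - \left(\frac{\partial^2 F}{\partial s^2}\right)^{-1} \frac{\partial C}{\partial s}.
\end{equation}

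Plugging this back in and evaluating at $\beta = 0$, where $\partial^2 F / \partial s^2 = \partial^2 E / \partial s^2$, yields
\begin{equation}
\left.\frac{d}{d\beta} C\bigl(\theta,\v,s_{\theta,\v}^\beta\bigr)\right|_{\beta=0}
= - \frac{\partial C}{\partial s}^\top \left(\frac{\partial^2 E}{\partial s^2}\bigl(\theta,\v,s_{\theta,\v}^0\bigr)\right)^{-1} \frac{\partial C}{\partial s},
\end{equation}
with $\partial C / \partial s$ evaluated at $s_{\theta,\v}^0$. Because $s_{\theta,\v}^0$ is a local minimum of $E$, its Hessian is symmetric positive definite, so its inverse is also positive definite, and the right-hand side is a negative quadratic form and therefore non-positive.

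There is no real obstacle in this argument beyond ensuring the regularity used for the implicit function theorem, which is already assumed in Section~\ref{sec:total-energy}; the sign arises automatically from the positive-definiteness of the energy Hessian at the free fixed point. The pleasant side-effect of the computation is that it makes the nudging intuition quantitative: the slope of $C$ along the branch of fixed points at $\beta = 0$ is strictly negative whenever $\partial C / \partial s \ne 0$ at the free fixed point, which justifies calling the weakly-clamped fixed point an improvement over the free fixed point in terms of prediction error.
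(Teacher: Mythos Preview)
Your argument is correct and is essentially the same as the paper's: both differentiate the fixed-point condition with respect to $\beta$, combine with the chain rule for $C$, and invoke positive definiteness of the Hessian of $F$ at the fixed point. The only cosmetic difference is that the paper multiplies the differentiated fixed-point equation by $\bigl(\partial s_\theta^\beta/\partial\beta\bigr)^T$ to obtain the non-positive quadratic form $-\bigl(\partial s_\theta^\beta/\partial\beta\bigr)^T \,\partial^2 F/\partial s^2\,\bigl(\partial s_\theta^\beta/\partial\beta\bigr)$ directly, whereas you invert the Hessian explicitly and express the same quantity as $-\,(\partial C/\partial s)^T\bigl(\partial^2 F/\partial s^2\bigr)^{-1}(\partial C/\partial s)$.
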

Proposition \ref{prop:deterministic} will also be proved in Appendix \ref{appendix:theorem}.
This proposition shows that, unless the free fixed point $s_{\theta,\v}^0$ is already optimal in terms of cost value,
for $\beta>0$ small enough, the weakly-clamped fixed point $s_{\theta,\v}^\beta$ achieves lower cost value than the free fixed point.
Thus, a small perturbation due to a small increment of $\beta$
would nudge the system towards a state that reduces the cost value.
This property sheds light on the update rule (Theorem \ref{thm:deterministic}), which can be seen as a kind of contrastive learning rule (somehow similar to the Boltzmann machine learning rule) where we {\em learn} (make more probable) the slightly better state $s_{\theta,\v}^\beta$ by reducing its energy
and {\em unlearn} (make less probable) the slightly worse state $s_{\theta,\v}^0$ by increasing its energy.

However, our learning rule is different from the Boltzmann machine learning rule
and the contrastive Hebbian learning rule.
The differences between these algorithms will be discussed in section \ref{sec:discussion}.

\begin{figure}[h!]
	\begin{center}
		\centerline{
		\includegraphics[width=0.4\columnwidth]{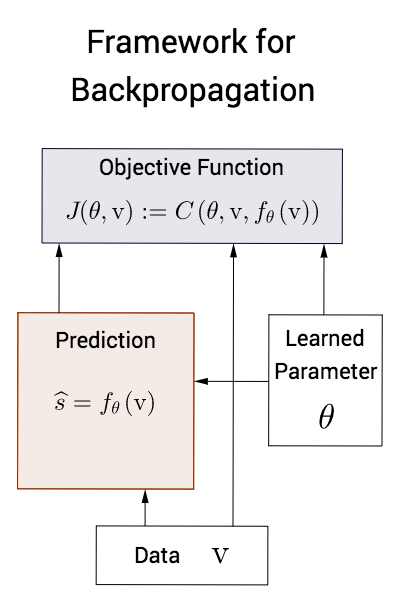}
		\includegraphics[width=0.4\columnwidth]{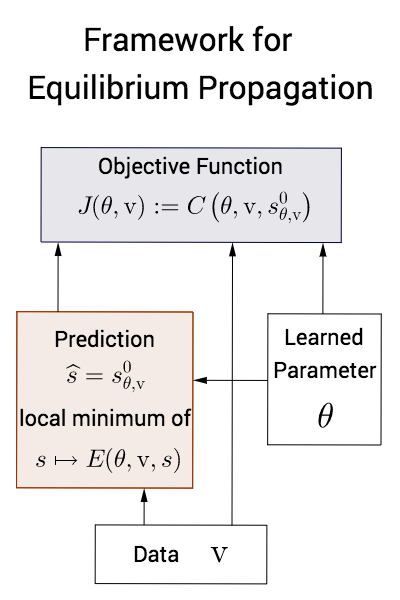}
		}
		\caption{Comparison between the traditional framework for Deep Learning and our framework.
		\textbf{Left.} In the traditional framework, the state of the network $f_\theta(\v)$ and the objective function $J(\theta,\v)$ are \textit{explicit} functions of $\theta$ and $\v$ and are computed \textit{analytically}. The gradient of the objective function is also computed analytically thanks to the Backpropagation algorithm (a.k.a automatic differentiation). \textbf{Right.} In our framework, the free fixed point $s_{\theta,\v}^0$ is an \textit{implicit} function of $\theta$ and $\v$ and is computed \textit{numerically}. The nudged fixed point $s_{\theta,\v}^\beta$ and the gradient of the objective function are also computed numerically, following our learning algorithm: Equilibrium Propagation.}
		\label{fig:back-equi}
	\end{center}
\end{figure}


\subsection{Another View of the Framework}

In sections \ref{sec:training-objective} and \ref{sec:total-energy} (as well as in section \ref{sec:bio-backprop})
we first defined the energy function $E$ and the cost function $C$,
and then we introduced the total energy $F := E + \beta C$.
Here we propose an alternative view of the framework, where we reverse the order in which things are defined.

Given a total energy function $F$ (which models all interactions within the network
as well as the action of the external world on the network), we can define all quantities of interest in terms of $F$.
Indeed, we can define the energy function $E$ and the cost function $C$ as
\begin{equation}
	\label{eq:E-C-F}
	E(\theta,\v,s) := F \left( \theta,\v,0,s \right) \qquad \text{and} \qquad
	C(\theta,\v,s) := \frac{\partial F}{\partial \beta} \left( \theta,\v,0,s \right),
\end{equation}
where $F$ and $\frac{\partial F}{\partial \beta}$ are evaluated with the argument $\beta$ set to $0$.
Obviously the fixed points $s_{\theta,\v}^0$ and $s_{\theta,\v}^\beta$ are directly defined in terms of $F$,
and so is the objective function \mbox{$J(\theta,\v) := C \left( \theta,\v,s_{\theta,\v}^0 \right)$.}
The learning algorithm (Theorem \ref{thm:deterministic}) is also formulated in terms of $F$.
\footnote{The proof presented in Appendix \ref{appendix:theorem} will show that $E$, $C$ and $F$ need not satisfy Eq.~\ref{eq:tot-en} but only Eq.~\ref{eq:E-C-F}.}
From this perspective, $F$ contains all the information about the model
and can be seen as the central object of the framework.
For instance, the cost $C$ represents the marginal variation of the total energy $F$ due to a change of $\beta$.

\bigskip

As a comparison, in the traditional framework for Deep Learning,
a model is represented by a (differentiable) computational graph
in which each node is defined as a function of its parents.
The set of functions that define the nodes fully specifies the model.
The last node of the computational graph represents the cost to be optimized,
while the other nodes represent the state of the layers of the network, as well as other intermediate computations.

In the framework for machine learning proposed here (the framework suited for Equilibrium Propagation),
the analog of the set of functions that define the nodes in the computational graph is the total energy function $F$.


\subsection{Backpropagation Vs Equilibrium Propagation}

In the traditional framework for Deep Learning (Figure \ref{fig:back-equi}, left),
each node in the computational graph is an \textit{explicit} differentiable function of its parents.
The state of the network $\widehat{s} = f_\theta(\v)$
and the objective function $J(\theta,\v) = C \left( \theta, \v, f_\theta(\v) \right)$
are computed \textit{analytically}, as functions of $\theta$ and $\v$, in the forward pass.
The Backpropagation algorithm (a.k.a automatic differentiation) enables to compute the error derivatives analytically too,
in the backward pass.
Therefore, the state of the network $\widehat{s} = f_\theta(\v)$ (forward pass)
and the gradient of the objective function $\frac{\partial J}{\partial \theta}(\theta,\v)$ (backward pass)
can be computed {\em efficiently} and {\em exactly}.
\footnote{Here we are not considering numerical stability issues due to the encoding of real numbers with finite precision.}

In the framework for machine learning that we propose here (Figure \ref{fig:back-equi}, right),
the free fixed point $\widehat{s} = s_{\theta,\v}^0$ is an {\em implicit} function of
$\theta$ and $\v$, characterized by $\frac{\partial E}{\partial s}(\theta,\v,s_{\theta,\v}^0) = 0$.
The free fixed point is computed \textit{numerically}, in the free phase (first phase).
Similarly the nudged fixed point $s_{\theta,\v}^\beta$ is an implicit function of
$\theta$, $\v$ and $\beta$, and is computed numerically in the nudged phase (second phase).
Equilibrium Propagation \textit{estimates} (for the particular value of $\beta$ chosen in the second phase)
the gradient of the objective function $\frac{\partial J}{\partial \theta}(\theta,\v)$ based on these two fixed points.
The requirement for numerical optimization in the first and second phases make computations \textit{inefficient} and \textit{approximate}.
The experiments in section \ref{sec:experiments} will show that the free phase is fairly long when
performed with a discrete-time computer simulation.
However, we expect that the full potential of the proposed framework
could be exploited on analog hardware (instead of digital hardware), as suggested by \citet{Hertz97}.


\section{Related Work}
\label{sec:discussion}

In section \ref{sec:backprop} we have discussed the relationship between Equilibrium Propagation and Backpropagation.
In the weakly clamped phase, the change of the influence parameter $\beta$ creates a perturbation at the output layer which
propagates backwards in the hidden layers. The error derivatives and the gradient of the objective function are encoded
by this perturbation.

In this section we discuss the connection between our work and other algorihms, starting with Contrastive Hebbian Learning.
Equilibrium Propagation offers a new perspective on the relationship between Backpropagation in feedforward nets
and Contrastive Hebbian Learning in Hopfield nets and Boltzmann machines (Table \ref{table:correspondence}).

\begin{center}
$\begin{array}{|c|c|c|c|c|}
\hline
	                    & \hbox{Backprop}        & \hbox{Equilibrium Prop}     & \hbox{Contrastive Hebbian Learning}      & \hbox{Almeida-Pineida}    \\
\hline
	\hbox{First Phase}  & \hbox{Forward Pass}    & \hbox{Free Phase}           & \hbox{Free Phase (or Negative Phase)}    & \hbox{Free Phase}         \\
	\hbox{Second Phase} & \hbox{Backward Pass}   & \hbox{Weakly Clamped Phase} & \hbox{Clamped Phase (or Positive Phase)} & \hbox{Recurrent Backprop} \\
\hline
\end{array}$
\captionof{table}{Correspondence of the phases for different learning algorithms: Back-propagation, Equilibrium Propagation (our algorithm), Contrastive Hebbian Learning (and Boltzmann Machine Learning) and Almeida-Pineida's Recurrent Back-Propagation}
\label{table:correspondence}
\end{center}


\subsection{Link to Contrastive Hebbian Learning}
\label{subsec:fn-vs-bm}

Despite the similarity between our learning rule and the
Contrastive Hebbian Learning rule (CHL) for the continuous Hopfield model,
there are important differences.

First, recall that our learning rule is
\begin{equation}
	\Delta W_{ij} \propto \lim_{\beta \to 0} \frac{1}{\beta} \left( \rho \left( u^\beta_i \right) \rho \left( u^\beta_j \right) - \rho \left( u^0_i \right) \rho \left( u^0_j \right) \right),
\end{equation}
where $u^0$ is the free fixed point and $u^\beta$ is the \textit{weakly} clamped fixed point.
The Contrastive Hebbian Learning rule is
\begin{equation}
	\Delta W_{ij} \propto \rho \left( u^\infty_i \right) \rho \left( u^\infty_j \right) - \rho \left( u^0_i \right) \rho \left( u^0_j \right),
\end{equation}
where $u^\infty$ is the \textit{fully} clamped fixed point (i.e. fixed point with fully clamped outputs).
We choose the notation $u^\infty$ for the fully clamped fixed point
because it corresponds to $\beta \rightarrow +\infty$ with the notations of our model.
Indeed Eq.~\ref{eq:external-force} shows that
in the limit $\beta \rightarrow +\infty$, the output unit $y_i$ moves infinitely
fast towards $\y_i$, so $y_i$ is immediately clamped to
$\y_i$ and is no longer sensitive to the 'internal force' Eq.~\ref{eq:leaky-integrator}.
Another way to see it is by considering Eq.~\ref{eq:total-energy}:
as $\beta \rightarrow +\infty$, the only value of $y$ that gives finite energy is $\y$.

The objective functions that these two algorithms optimize also differ.
Recalling the form of the Hopfield energy (Eq.~\ref{eq:internal-potential})
and the cost function (Eq.~\ref{eq:external-potential}),
Equilibrium Propagation computes the gradient of
\begin{equation}
	J = \frac{1}{2} \norm{ y^0 - {\mathrm y} }^2,
\end{equation}
where $y^0$ is the output state at the free phase fixed point $u^0$,
while CHL computes the gradient of
\begin{equation}
	J_{\rm CHL} = E \left( u^\infty \right) - E \left( u^0 \right).
\end{equation}
The objective function for CHL has theoretical problems:
it may take negative values if the clamped phase and free phase stabilize in different modes of the energy function,
in which case the weight update is inconsistent and learning usually deteriorates, as pointed out by \citet{Movellan-1990}.
Our objective function does not suffer from this problem,
because it is defined in terms of local perturbations,
and the implicit function theorem guarantees that the weakly clamped fixed point
will be close to the free fixed point (thus in the same mode of the energy function).

We can also reformulate the learning rules and objective functions of these algorithms
using the notations of the general setting (section \ref{sec:framework}).
For Equilibrium Propagation we have
\begin{equation}
	\Delta \theta \propto - \lim_{\beta \to 0} \frac{1}{\beta} \left( \frac{\partial F}{\partial \theta} \left( \theta,\v,\beta,s_{\theta,\v}^\beta \right) - \frac{\partial F}{\partial \theta} \left( \theta,\v,0,s_{\theta,\v}^0 \right) \right) \qquad \text{and} \qquad
	J(\theta,\v) = \frac{\partial F}{\partial \beta} \left( \theta,\v,0,s_{\theta,\v}^0 \right).
\end{equation}
As for Contrastive Hebbian Learning, one has
\begin{equation}
	\Delta \theta \propto - \left( \frac{\partial F}{\partial \theta} \left( \theta,\v,\infty,s_{\theta,\v}^\infty \right) - \frac{\partial F}{\partial \theta} \left( \theta,\v,0,s_{\theta,\v}^0 \right) \right) \qquad \text{and} \qquad
	J_{\rm CHL}(\theta,\v) = F(\theta,\v,\infty,s_{\theta,\v}^\infty) - F(\theta,\v,0,s_{\theta,\v}^0),
\end{equation}
where $\beta = 0$ and $\beta=\infty$ are the values of $\beta$ corresponding to free and (fully) clamped outputs respectively.

Our learning algorithm is also more flexible because we are free to choose the cost function $C$ (as well as the energy funtion $E$),
whereas the contrastive function that CHL optimizes is fully determined by the energy function $E$.


\subsection{Link to Boltzmann Machine Learning}
\label{sec:diff-bml}

Again, the log-likelihood
that the Boltzmann machine optimizes is determined by the Hopfield energy $E$,
whereas we have the freedom to choose the cost function in
the framework for Equilibrium Propagation.

As discussed in Section \ref{sec:backprop},
the second phase of Equilibrium Propagation (going from the free fixed point to the weakly clamped fixed point)
can be seen as a brief 'backpropagation phase' with weakly clamped target outputs.
By contrast, in the positive phase of the Boltzmann machine, the target is fully clamped,
so the (correct version of the) Boltzmann machine learning rule requires two separate and independent phases (Markov chains),
making an analogy with backprop less obvious.

Our algorithm is also similar in spirit to the CD algorithm
(Contrastive Divergence) for Boltzmann machines.
In our model, we start from a free fixed point
(which requires a long relaxation in the free phase)
and then we run a short weakly clamped phase.
In the CD algorithm, one starts from a positive equilibrium sample with the visible units clamped
(which requires a long positive phase Markov chain in the case of a general Boltzmann machine)
and then one runs a short negative phase.
But there is an important difference:
our algorithm computes the {\em correct}
gradient of our objective function (in the limit $\beta \to 0$),
whereas the CD algorithm computes a {\em biased estimator} of the
gradient of the log-likelihood.
The ${\rm CD}_1$ update rule is provably not the
gradient of any objective function
and may cycle indefinitely in some pathological cases \citep{sutskever2010convergence}.

Finally, in the supervised setting presented in Section \ref{sec:bio-backprop}, a more subtle difference with the Boltzmann machine
is that the 'output' state $y$ in our model
is best thought of as being part of the latent state variable $s$.
If we were to make an analogy with the Boltzmann machine,
the visible units of the Boltzmann machine would be $\v = \set{\x,\y}$,
while the hidden units would be $s=\set{h,y}$.
In the Boltzmann machine, the state of the external world is inferred directly on the visible units
(because it is a probabilistic generative model that maximizes the log-likelyhood of the data),
whereas in our model we make the choice to integrate in $s$ special latent variables $y$ that aim to match the target $\y$.


\subsection{Link to Recurrent Back-Propagation}

Directly connected to our model is the work by~\citet{Pineda87,Almeida87} on recurrent back-propagation.
They consider the same objective function as ours, but formulate the problem as a constrained optimization problem.
In Appendix \ref{appendix:constrained-optimization} we derive another proof for the learning rule (Theorem \ref{thm:deterministic})
with the Lagrangian formalism for constrained optimization problems.
The beginning of this proof is in essence the same as the one proposed by~\citet{Pineda87,Almeida87}, but
there is a major difference when it comes to solving Eq.~\ref{eq:lambda-star} for the costate variable $\lambda^*$.
The method proposed by~\citet{Pineda87,Almeida87} is to use Eq.~\ref{eq:lambda-star} to compute $\lambda^*$
by a fixed point iteration in a linearized form of the recurrent network.
The computation of $\lambda^*$ corresponds to their second phase, which they call {\em recurrent back-propagation}.
However, this second phase does not follow the same kind of dynamics as the first phase
(the free phase)
because it uses a linearization of the neural activation rather than the fully non-linear activation.
\footnote{Reccurent Back-propagation corresponds to Back-propagation Through Time (BPTT)
when the network converges and remains at the fixed point for a large number of time steps.}
From a biological plausibility point of view, having to use a different kind
of hardware and computation for the two phases is not satisfying.

By contrast, like the continuous Hopfield net and the Boltzmann machine,
our model involves only one kind of neural computations for both phases.


\subsection{The Model by Xie \& Seung}

Previous work on the back-propagation interpretation of
contrastive Hebbian learning was done by~\citet{Xie+Seung-2003}.

The model by ~\citet{Xie+Seung-2003} is a modified version of the Hopfield model.
They consider the case of a layered MLP-like network, but their model can be extended to a more general connectivity, as shown here.
In essence, using the notations of our model (section \ref{sec:bio-backprop}), the energy function that they consider is
\begin{equation}
	E_{X\&S}(u) := \frac{1}{2} \sum_i \gamma^i u_i^2 - \sum_{i < j} \gamma^j W_{ij} \rho(u_i) \rho(u_j) - \sum_i \gamma^i b_i \rho(u_i).
\end{equation}
The difference with Eq.~\ref{eq:internal-potential} is that they introduce a parameter $\gamma$,
assumed to be small, that scales the strength of the connections.
Their update rule is the contrastive Hebbian learning rule which, for this particular energy function, takes the form
\begin{equation}
	\label{eq:chl-xie-seung}
	\Delta W_{ij} \propto - \left( \frac{\partial E_{X\&S}}{\partial W_{ij}} \left( u^\infty \right)
	- \frac{\partial E_{X\&S}}{\partial W_{ij}} \left( u^0 \right) \right)
	= \gamma^j \left( \rho \left( u^\infty_i \right) \rho \left( u^\infty_j \right) - \rho \left( u^0_i \right) \rho \left( u^0_j \right) \right)
\end{equation}
for every pair of indices $(i,j)$ such that $i < j$. Here $u^\infty$ and $u^0$ are the (fully) clamped fixed point and free fixed point respectively.
~\citet{Xie+Seung-2003} show that in the regime $\gamma \to 0$ this contrastive Hebbian learning rule is equivalent to back-propagation.
At the free fixed point $u^0$, one has $\frac{\partial E_{X\&S}}{\partial s_i}(u^0) = 0$ for every unit $s_i$
\footnote{Recall that in our notations, the state variable $s$ does not include the clamped inputs $\x$, whereas $u$ includes $\x$.},
which yields, after dividing by $\gamma^i$ and rearranging the terms
\begin{equation}
	s^0_i = \rho' \left( s^0_i \right) \left( \sum_{j<i} W_{ij} \rho \left( u^0_j \right) + \sum_{j>i} \gamma^{j-i} W_{ij} \rho \left( u^0_j \right) + b_i \right).
\end{equation}
In the limit $\gamma \to 0$, one gets $s^0_i \approx \rho'(s^0_i) \left( \sum_{j<i} W_{ij} \rho(u^0_j) + b_i \right)$,
so that the network almost behaves like a feedforward net in this regime.

As a comparison, recall that in our model (section \ref{sec:bio-backprop}) the energy function is
\begin{equation}
	E(u) := \frac{1}{2} \sum_i u_i^2 - \sum_{i < j} W_{ij} \rho(u_i) \rho(u_j) - \sum_i b_i \rho(u_i),
\end{equation}
the learning rule is
\begin{equation}
	\Delta W_{ij} \propto - \lim_{\beta \to 0} \frac{1}{\beta} \left( \frac{\partial E}{\partial W_{ij}} \left( u^\beta \right)
	- \frac{\partial E}{\partial W_{ij}} \left( u^0 \right) \right)
	= \lim_{\beta \to 0} \frac{1}{\beta} \left( \rho \left( u^\beta_i \right) \rho \left( u^\beta_j \right) - \rho \left( u^0_i \right) \rho \left( u^0_j \right) \right),
\end{equation}
and at the free fixed point, we have $\frac{\partial E}{\partial s_i}(u^0) = 0$ for every unit $s_i$, which gives
\begin{equation}
	s^0_i = \rho' \left( s^0_i \right) \left( \sum_{j \neq i} W_{ij} \rho \left( u^0_j \right) + b_i \right).
\end{equation}

Here are the main differences between our model and theirs.
In our model, the feedforward and feedback connections are both strong.
In their model, the feedback weights are tiny compared to the feedforward weights,
which makes the (recurrent) computations look almost feedforward.
In our second phase, the outputs are weakly clamped. In their second phase, they are fully clamped.
The theory of our model requires a unique learning rate for the weights, while in their model
the update rule for $W_{ij}$ (with $i<j$) is scaled by a factor $\gamma^j$ (see Eq.~\ref{eq:chl-xie-seung}).
Since $\gamma$ is small, the learning rates for the weights vary on many orders of magnitude in their model.
Intuitively, these multiple learning rates are required to compensate for the small feedback weights.


\section{Implementation of the Model and Experimental Results}
\label{sec:experiments}

In this section, we provide experimental evidence that our model described in section \ref{sec:bio-backprop} is trainable,
by testing it on the classification task of MNIST digits \citep{lecun1998mnist}.
The MNIST dataset of handwritten digits consists of 60,000 training examples and 10,000 test examples.
Each example $\x$ in the dataset is a gray-scale image of $28$ by $28$ pixels and comes with a label $\y \in \set{0,1,\ldots,9}$.
We use the same notation $\y$ for the one-hot encoding of the target, which is a 10-dimensional vector.

Recall that our model is a recurrently connected neural network with symmetric connections.
Here, we train multi-layered networks with 1, 2 and 3 hidden layers,
with no skip-layer connections and no lateral connections within layers.
Although we believe that analog hardware would be more suited for our model,
here we propose an implementation on digital hardware (a GPU).
We achieve 0.00\% training error.
The generalization error lies between 2\% and 3\% depending on the architecture (Figure \ref{fig:train_error}).

For each training example $(\x,\y)$ in the dataset, training proceeds as follows:
\begin{enumerate}
	\item Clamp $\x$.
	\item Run the free phase until the hidden and output units settle to the free fixed point, and collect $\rho \left( u_i^0 \right) \rho \left( u_j^0 \right)$ for every pair of units $i,j$.
	\item Run the weakly clamped phase with a "small" $\beta > 0$ until the hidden and output units settle to the weakly clamped fixed point, and collect $\rho \left( u_i^\beta \right) \rho \left( u_j^\beta \right)$.
	\item Update each synapse $W_{ij}$ according to
		\begin{equation}
			\Delta W_{ij} \propto \frac{1}{\beta} \left( \rho \left( u_i^\beta \right) \rho \left( u_j^\beta \right) - \rho \left( u_i^0 \right) \rho \left( u_j^0 \right) \right).
		\end{equation}
\end{enumerate}

The prediction is made at the free fixed point $u^0$
at the end of the first phase relaxation.
The predicted value $y_{\rm pred}$ is the index of the output unit
whose activation is maximal among the $10$ output units:
\begin{equation}
	y_{\rm pred} := \underset{i}{\arg \max} \; y_i^0.
\end{equation}
Note that no constraint is imposed on the activations
of the units of the output layer in our model,
unlike more traditional neural networks where a softmax output layer
is used to constrain them to sum up to $1$.
Recall that the objective function that we minimize is
the square of the difference between our prediction and the
one-hot encoding of the target value:
\begin{equation}
	J = \frac{1}{2} \norm{\y-y^0}^2.
\end{equation}

\begin{figure}[ht]
	\begin{center}
		\centerline{
			\includegraphics[width=0.5\columnwidth]{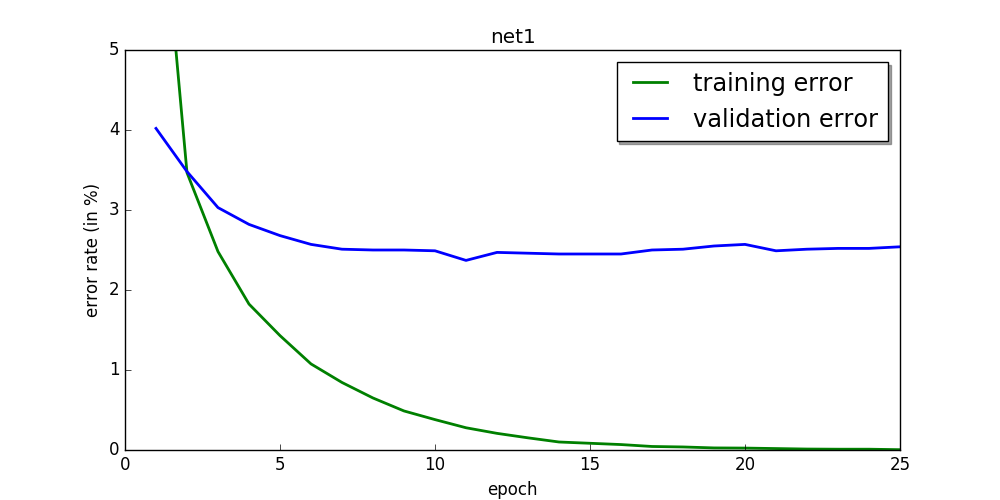}
			\includegraphics[width=0.5\columnwidth]{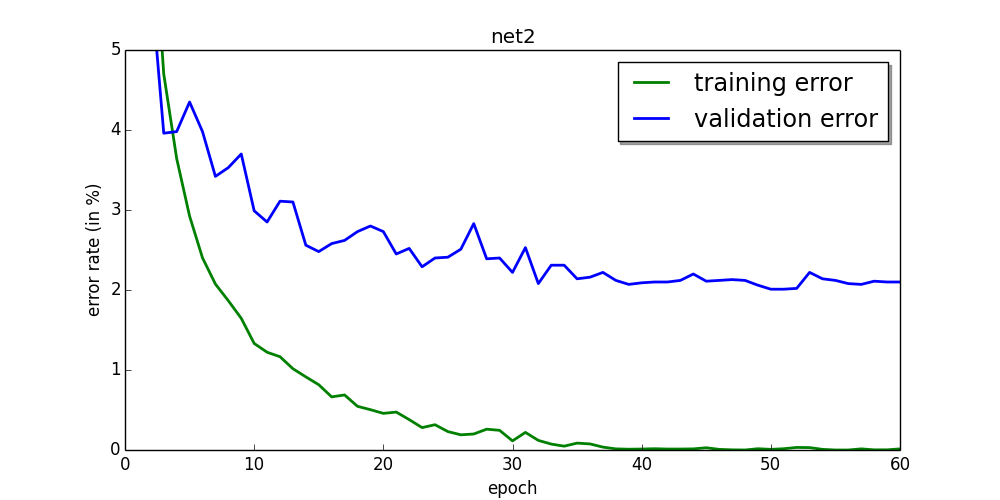}
		}
		\centerline{
		\includegraphics[width=\columnwidth]{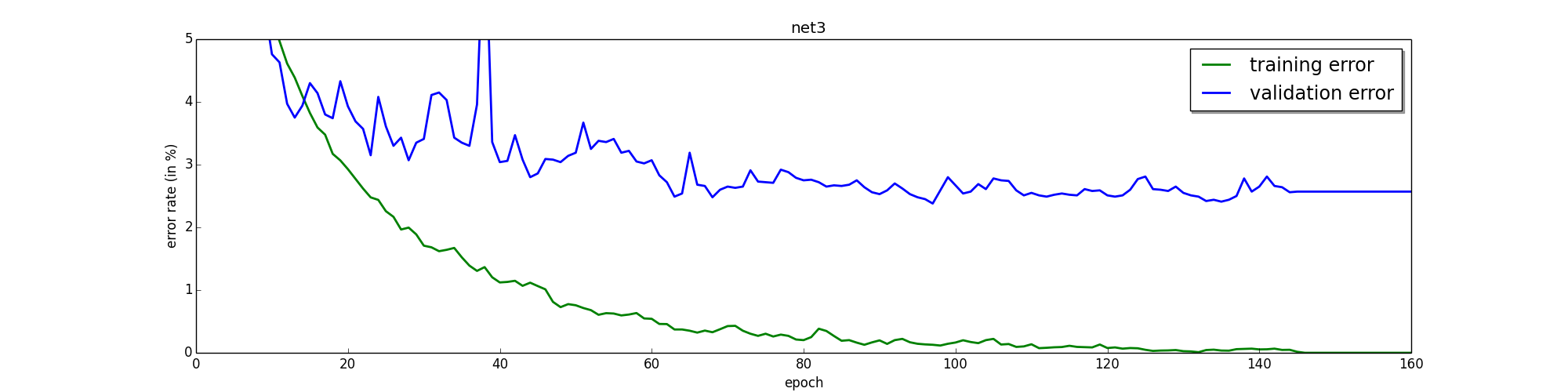}
		}
		\caption{Training and validation error for neural networks with 1 hidden layer of 500 units (top left), 2 hidden layers of 500 units (top right), and 3 hidden layers of 500 units (bottom). The training error eventually decreases to 0.00\% in all three cases.}
		\label{fig:train_error}
	\end{center}
\end{figure}


\subsection{Finite Differences}
\label{subsec:finite_diff}

{\bf Implementation of the differential equation of motion.}
First we clamp $\x$. Then the obvious way
to implement Eq.~\ref{eq:gradient-system} is to discretize time into short
time lapses of duration $\epsilon$ and to update each hidden and output unit $s_i$ according to
\begin{equation}
	\label{eq:gradient-descent}
	s_i \leftarrow s_i - \epsilon \frac{\partial F}{\partial s_i}(\theta,\v,\beta,s).
\end{equation}
This is simply one step of gradient descent on the total energy $F$, with step size $\epsilon$.

For our experiments, we choose the
hard sigmoid activation function $\rho(s_i) = 0 \vee s_i \wedge 1$,
where $\vee$ denotes the max and $\wedge$ the min.
For this choice of $\rho$, since $\rho'(s_i)=0$ for $s_i<0$,
it follows from Eq.~\ref{eq:leaky-integrator} and Eq.~\ref{eq:external-force}
that if $h_i<0$ then $\frac{\partial F}{\partial h_i}(\theta,\v,\beta,s) = - h_i > 0$.
This force prevents the hidden unit $h_i$ from going in the range of negative values.
The same is true for the output units.
Similarly, $s_i$ cannot reach values above $1$. As a consequence $s_i$
must remain in the domain $0 \leq s_i \leq 1$.
Therefore, rather than the
standard gradient descent (Eq.~\ref{eq:gradient-descent}), we will use a slightly
different update rule for the state variable $s$:
\begin{equation}
	\label{eq:clipped-gradient-descent}
	s_i \leftarrow 0 \vee \left( s_i - \epsilon \frac{\partial F}{\partial s_i}(\theta,\v,\beta,s) \right) \wedge 1.
\end{equation}
This little implementation detail turns out to be very important:
if the $i$-th hidden unit was in some state $h_i < 0$, then
Eq.~\ref{eq:gradient-descent} would give the update rule $h_i \leftarrow (1 - \epsilon) h_i$,
which would imply again $h_i < 0$ at the next time step (assuming $\epsilon < 1$).
As a consequence $h_i$ would remain in the negative range forever.

\bigskip

{\bf Choice of the step size $\epsilon$.} We find experimentally that the choice of
$\epsilon$ has little influence as long as $0 < \epsilon < 1$.
What matters more is the \textit{total duration of the relaxation}
$\Delta t = n_{\rm iter} \times \epsilon$
(where $n_{\rm iter}$ is the number of iterations).
In our experiments we choose $\epsilon = 0.5$
to keep $n_{\rm iter} = \Delta t / \epsilon$
as small as possible so as to
avoid extra unnecessary computations.

{\bf Duration of the free phase relaxation.}
We find experimentally that the number of iterations required in the
free phase to reach the free fixed point is large and grows fast as the number of
layers increases (Table \ref{table:hyperparameters}),
which considerably slows down training. More experimental
and theoretical investigation would be needed to analyze the number
of iterations required, but we leave that for future work.

{\bf Duration of the weakly clamped phase.}
During the weakly clamped phase,
we observe that the relaxation to the weakly clamped fixed point is not necessary.
We only need to 'initiate' the movement of the units, and for that we use the following heuristic.
Notice that the time constant of the integration process
in the leaky integrator equation Eq.~\ref{eq:leaky-integrator} is $\tau=1$.
This time constant represents the time
needed for a signal to propagate from a layer to the next one with
"significant amplitude". So the time
needed for the error signals to back-propagate in the network is $N \tau = N$,
where $N$ is the number of layers (hiddens and output) of the network.
Thus, we choose to perform $N/\epsilon$ iterations with step size $\epsilon=0.5$.


\subsection{Implementation Details and Experimental Results}
\label{subsec:results}

To tackle the problem of the long free phase relaxation and speed-up
the simulations, we use 'persistent particles' for the latent variables to re-use the previous fixed point
configuration for a particular example as a starting point for the next free phase
relaxation on that example.
This means that for each training example in the dataset, we store the state of
the hidden layers at the end of the free phase,
and we use this to initialize the state of the network at the next epoch.
This method is similar in spirit to the PCD algorithm (Persistent Contrastive Divergence)
for sampling from other energy-based models like the Boltzmann
machine \citep{tieleman2008training}.

We find that it helps regularize the network if we choose the sign of $\beta$ at random in the second phase.
Note that the weight updates remain consistent thanks to the factor $1/\beta$ in the update rule
$\Delta W_{ij} \propto \frac{1}{\beta} \left( \rho \left( u_i^\beta \right) \rho \left( u_j^\beta \right) - \rho \left( u_i^0 \right) \rho \left( u_j^0 \right) \right)$.
Indeed, the left-derivative and the right-derivative of the function $\beta \mapsto \rho \left( u_i^\beta \right) \rho \left( u_j^\beta \right)$ at the point $\beta=0$ coincide.

Although the theory presented in this paper requires a unique learning rate for all synaptic weights,
in our experiments we need to choose different learning rates
for the weight matrices of different layers to make the algorithm work.
We do not have a clear explanation for this fact yet, but we believe that this is due
to the finite precision with which we approach the fixed points.
Indeed, the theory requires to be exactly at the fixed points, but in practice we minimize the energy function by numerical optimization,
using Eq.~\ref{eq:clipped-gradient-descent}. The precision with which we approach the fixed points depends on hyperparameters such as
the step size $\epsilon$ and the number of iterations $n_{\rm iter}$.

Let us denote by $h_0, h_1, \cdots, h_N$
the layers of the network (where $h_0 = \x$ and $h_N = y$)
and by $W_k$ the weight matrix between the layers $h_{k-1}$ and $h_k$.
We choose the learning rate $\alpha_k$ for $W_k$ so that the quantities
$\frac{\norm{\Delta W_k}}{\norm{W_k}}$ for $k=1,\cdots,N$ are approximately
the same in average (over training examples), where $\norm{\Delta W_k}$ represents the weight
change of $W_k$ after seeing a minibatch.

The hyperparameters chosen for each model are shown in Table \ref{table:hyperparameters}
and the results are shown in Figure \ref{fig:train_error}.
We initialize the weights according to the Glorot-Bengio
initialization~\citep{GlorotAISTATS2010-small}.
For efficiency of the experiments, we use minibatches of 20 training examples.

\begin{center}
$\begin{array}{|c|cc|c|c|cccc|}
\hline
	\hbox{Architecture} & \hbox{Iterations}      & \hbox{Iterations}     & \epsilon & \beta & \alpha_1 & \alpha_2 & \alpha_3 & \alpha_4 \\
	                    & (\hbox{first phase})   & (\hbox{second phase}) &    &     &          &          &          &       \\
\hline
	784-500-10          & 20  & 4 & 0.5 & 1.0 & 0.1   & 0.05  &       &       \\
	784-500-500-10      & 100 & 6 & 0.5 & 1.0 & 0.4   & 0.1   & 0.01  &       \\
	784-500-500-500-10  & 500 & 8 & 0.5 & 1.0 & 0.128 & 0.032 & 0.008 & 0.002 \\
\hline
\end{array}$
  \captionof{table}{Hyperparameters. The learning rate $\epsilon$ is used
    for iterative inference (Eq.~\ref{eq:clipped-gradient-descent}).
    $\beta$ is the value of the clamping factor in the second phase.
    $\alpha_k$ is the learning rate for updating the parameters in layer $k$.}
\label{table:hyperparameters}
\end{center}


\section{Discussion, Looking Forward}


From a biological perspective, a troubling issue in the Hopfield model is
the requirement of symmetric weights between the units.
Note that the units in our model
need not correspond exactly to actual neurons in the brain (it could be
groups of neurons in a cortical microcircuit, for example). It remains to
be shown how a form of symmetry could arise from the learning procedure
itself (for example from autoencoder-like unsupervised learning)
or if a different formulation could eliminate the symmetry requirement.
Encouraging cues come from the observation
that denoising autoencoders without tied weights often end up learning
symmetric weights~\citep{Vincent-JMLR-2010-small}.
Another encouraging piece of evidence, also linked to autoencoders, is the
theoretical result from~\citet{Arora-et-al-2015}, showing that the symmetric solution
minimizes the autoencoder reconstruction error between two successive layers of
rectifying (ReLU) units, suggesting that symmetry may arise as the result
of an additional objective function making successive layers form an autoencoder.
Also, \citet{Lillicrap-et-al-arxiv2014} show that the backpropagation algorithm
for feedforward nets also works when the feedback weights are random,
and that in this case the feedforward weight tend to 'align' with the feedback weights.


Another practical issue is that we would like to reduce
the negative impact of a lengthy relaxation to a fixed point, especially
in the free phase. A possibility is explored
by~\citet{Bengio-fastinference-arXiv2016} and was initially
discussed by~\citet{Salakhutdinov+Hinton-2009-small} in the context of
a stack of RBMs: by making each layer a good autoencoder, it is possible
to make this iterative inference converge quickly after an initial
feedforward phase, because the feedback paths ``agree'' with the states
already computed in the feedforward phase.

Regarding synaptic plasticity,
the proposed update formula can be contrasted with theoretical synaptic learning rules
which are based on the Hebbian product of pre- and postsynaptic activity,
such as the BCM rule~\citep{Bienenstock82,Intrator+Cooper-1992}.
The update proposed here is particular in that
it involves the temporal derivative of the postsynaptic activity, rather than
the actual level of postsynaptic activity.

Whereas our work focuses on a rate model of neurons, see~\citet{Feldman-2012}
for an overview of synaptic plasticity that goes
beyond spike timing and firing rate, including
synaptic cooperativity (nearby synapses on the
same dendritic subtree) and depolarization (due to multiple consecutive
pairings or spatial integration across nearby locations on the dendrite,
as well as the effect of the synapse's distance to the soma).
In addition, it would be interesting to study update rules
which depend on the statistics of triplets
or quadruplets of spikes timings, as in~\citet{Froemke+Dan-2002,Gjorgjievaa-et-al-2011}.
These effects are not considered here but future work should
consider them.

Another question is that of time-varying input. Although this work makes
back-propagation more plausible for the case of a static input, the brain
is a recurrent network with time-varying inputs, and back-propagation
through time seems even less plausible than static back-propagation. An
encouraging direction is that proposed by~\citet{Ollivier-et-al-arXiv2015,Tallec-arxiv2017},
which shows that computationally efficient estimators of the gradient can
be obtained using a forward method (online estimation of the gradient),
which avoids the need to store all past states in training sequences, at the
price of a noisy estimator of the gradient.


\section*{Acknowledgments}

The authors would like to thank Akram Erraqabi, Alex Lamb, Alexandre Thiery, Mihir Mongia, Samira Shabanian and
Asja Fischer for feedback and discussions, as well as NSERC, CIFAR, Samsung
and Canada Research Chairs for funding, and Compute Canada
for computing resources. We would also like to thank the developers of Theano
\footnote{http://deeplearning.net/software/theano/}, for developing such a
powerful tool for scientific computing.


\bibliographystyle{natbib}
\bibliography{strings,ml,aigaion,biblio}


\appendix
\part*{Appendix}


\section{Proof of the Gradient Formula (Theorem \ref{thm:deterministic})}
\label{appendix:theorem}

Here we prove Theorem \ref{thm:deterministic} by directly computing the gradient of $J$.
Another proof based on constrained optimization is proposed in Appendix \ref{appendix:constrained-optimization}.

We first state and prove a lemma for a twice differentiable function $F(\theta,\beta,s)$.
We assume that the conditions of the implicit function theorem are satisfied
so that the fixed point $s_\theta^\beta$ is a continuously differentiable function of $(\theta,\beta)$.
Since $\v$ does not play any role in the lemma, its dependence is omitted in the notations.

\begin{lem}[Deterministic version]
  \label{lemma:deterministic}
  Let $F(\theta,\beta,s)$ be a twice differentiable function and $s_\theta^\beta$ a fixed point characterized by
  \begin{equation}
    \label{eqn:fx-pt-eq}
    \frac{\partial F}{\partial s} \left( \theta,\beta,s_\theta^\beta \right) = 0.
  \end{equation}
  Then we have
  \begin{equation}
    \label{eq:thm-deterministic}
    \left( \frac{d}{d \theta} \frac{\partial F}{\partial \beta} \left( \theta,\beta,s_\theta^\beta \right) \right)^T
    = \frac{d}{d \beta} \frac{\partial F}{\partial \theta} \left( \theta,\beta,s_\theta^\beta \right).
  \end{equation}
  The notations $\frac{\partial F}{\partial \theta}$ and $\frac{\partial F}{\partial \beta}$
  are used to mean the \textit{partial derivatives} with respect to the first and second arguments of $F$ respectively,
  whereas $\frac{d}{d \theta}$ and $\frac{d}{d \beta}$ represent the \textit{total derivatives}
  with respect to $\theta$ and $\beta$ respectively
  (which include the differentiation path through $s_\theta^\beta$).
  The total derivative $\frac{d}{d \theta}$ (resp. $\frac{d}{d \beta}$) is performed for fixed $\beta$ (resp. fixed $\theta$).
\end{lem}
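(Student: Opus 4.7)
The cleanest route is the envelope-function trick. Define the composite function $G(\theta,\beta) := F(\theta,\beta,s_\theta^\beta)$, where $s_\theta^\beta$ is the fixed point supplied by the implicit function theorem. The plan is to show that both sides of \eqref{eq:thm-deterministic} are, respectively, the mixed second partials $\frac{\partial^2 G}{\partial \theta\, \partial \beta}$ and $\frac{\partial^2 G}{\partial \beta\, \partial \theta}$, so the identity follows from Schwarz's theorem.

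\textbf{Step 1: First-order envelope.} I would apply the chain rule to $G$:
\begin{equation*}
\frac{\partial G}{\partial \theta}(\theta,\beta) = \frac{\partial F}{\partial \theta}\bigl(\theta,\beta,s_\theta^\beta\bigr) + \frac{\partial F}{\partial s}\bigl(\theta,\beta,s_\theta^\beta\bigr) \cdot \frac{\partial s_\theta^\beta}{\partial \theta}.
\end{equation*}
The fixed-point condition \eqref{eqn:fx-pt-eq} makes the second term vanish, so $\frac{\partial G}{\partial \theta} = \frac{\partial F}{\partial \theta}(\theta,\beta,s_\theta^\beta)$. An identical argument gives $\frac{\partial G}{\partial \beta} = \frac{\partial F}{\partial \beta}(\theta,\beta,s_\theta^\beta)$. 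This is where the hypothesis bites: the implicit dependence of $s_\theta^\beta$ on the outer variables contributes nothing to first-order envelope derivatives.

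\textbf{Step 2: Equate mixed partials.} Now differentiate each of the two expressions from Step~1 in the other variable. By definition of total derivative, $\frac{d}{d\beta}\frac{\partial F}{\partial \theta}(\theta,\beta,s_\theta^\beta) = \frac{\partial^2 G}{\partial \beta\, \partial \theta}$ and $\frac{d}{d\theta}\frac{\partial F}{\partial \beta}(\theta,\beta,s_\theta^\beta) = \frac{\partial^2 G}{\partial \theta\, \partial \beta}$. Schwarz's theorem for $G$ then gives the equality of these two second partials, and the transpose in \eqref{eq:thm-deterministic} is simply the reconciliation of index conventions: $\partial/\partial\theta$ of a scalar produces a row, $\partial/\partial\beta$ of a column produces a column.

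\textbf{Main obstacle and fallback.} The delicate point is the regularity needed to invoke Schwarz on $G$, since \emph{a priori} $s_\theta^\beta$ is only $C^1$ (from the implicit function theorem applied to $\partial F/\partial s = 0$, using that $\partial^2 F/\partial s^2$ is invertible). If one wants to avoid strengthening the smoothness assumption on $F$, I would instead expand both sides of \eqref{eq:thm-deterministic} by the chain rule directly, substitute the implicit-function formulas $\frac{\partial s_\theta^\beta}{\partial \theta} = -(\partial^2 F/\partial s^2)^{-1}(\partial^2 F/\partial s\, \partial \theta)$ and $\frac{\partial s_\theta^\beta}{\partial \beta} = -(\partial^2 F/\partial s^2)^{-1}(\partial^2 F/\partial s\, \partial \beta)$, and then verify the equality term-by-term. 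The ``diagonal'' terms $\partial^2 F/\partial \theta\, \partial \beta$ agree by symmetry of mixed partials of $F$, while the ``cross'' terms match after transposition using symmetry of the Hessian $\partial^2 F/\partial s^2$. This is a direct but slightly tedious calculation; the envelope viewpoint explains conceptually why it must work.
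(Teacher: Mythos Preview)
Your proposal is correct and matches the paper's approach essentially line for line: the paper's ``concise proof'' is exactly your envelope argument via $G(\theta,\beta):=F(\theta,\beta,s_\theta^\beta)$, the fixed-point condition killing the $s$-term in the first derivatives, and symmetry of mixed partials of $G$. Your fallback---expanding both sides by the chain rule and matching terms using the implicit-function formulas for $\partial s_\theta^\beta/\partial\theta$ and $\partial s_\theta^\beta/\partial\beta$---is also given in the paper as a second, more explicit proof, so you have anticipated both arguments.
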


Interestingly, the variables $\theta$ and $\beta$ play symmetric roles in Eq.~\ref{eq:thm-deterministic}.

\begin{proof}[A Concise Proof of Lemma \ref{lemma:deterministic}]
	Consider the function
	\begin{equation}
	  \label{eq:G}
	  G(\theta,\beta) := F \left( \theta,\beta,s_\theta^\beta \right),
	\end{equation}
	which is the value of the total energy at the fixed point. The cross-derivatives of $G$ are transpose of each other:
	\begin{equation}
	  \left( \frac{\partial^2 G}{\partial \theta \partial \beta}(\theta,\beta) \right)^T
	  = \frac{\partial^2 G}{\partial \beta \partial \theta}(\theta,\beta).
	\end{equation}
	This can be rewritten in the form
	\begin{equation}
		\label{eq:cross-derivatives}
		\left( \frac{d}{d \theta} \frac{d}{d \beta} F \left( \theta,\beta,s_\theta^\beta \right) \right)^T
		= \frac{d}{d \beta} \frac{d}{d \theta} F \left( \theta,\beta,s_\theta^\beta \right).
	\end{equation}
	By the chain rule of differentiation we have
	\begin{equation}
	  \label{eq:derivative-beta}
	  \frac{d}{d\beta} F \left( \theta,\beta,s_\theta^\beta \right)
	  = \frac{\partial F}{\partial \beta} \left( \theta,\beta,s_\theta^\beta \right)
	  + \frac{\partial F}{\partial s} \left( \theta,\beta,s_\theta^\beta \right) \cdot \frac{\partial s_\theta^\beta}{\partial \beta}
	  = \frac{\partial F}{\partial \beta} \left( \theta,\beta,s_\theta^\beta \right).
	\end{equation}
	Here we have used the fixed point condition (Eq.~\ref{eqn:fx-pt-eq}).
	Similarly we have
	\begin{equation}
	  \label{eq:derivative-theta}
	  \frac{d}{d\theta} F \left( \theta,\beta,s_\theta^\beta \right)
	  = \frac{\partial F}{\partial \theta} \left( \theta,\beta,s_\theta^\beta \right).
	\end{equation}
	Plugging Eq.~\ref{eq:derivative-beta} and Eq.~\ref{eq:derivative-theta} in Eq.~\ref{eq:cross-derivatives}, we get
	\begin{equation}
		\left( \frac{d}{d \theta} \frac{\partial F}{\partial \beta} \left( \theta,\beta,s_\theta^\beta \right) \right)^T
		= \frac{d}{d \beta} \frac{\partial F}{\partial \theta} \left( \theta,\beta,s_\theta^\beta \right).
	\end{equation}
\end{proof}

To provide the reader with more details and more insights, we propose another proof of Lemma \ref{lemma:deterministic}
in which we explicitly compute the cross-derivatives of the function $G$ (defined in Eq.~\ref{eq:G}).

\begin{proof}[Another Proof of Lemma \ref{lemma:deterministic}]
	First we differentiate the fixed point equation Eq.~\ref{eqn:fx-pt-eq} with respect to $\beta$:
	\begin{equation}
	  \label{eqn:d-dbeta}
	  \frac{d}{d\beta} \; (\ref{eqn:fx-pt-eq}) \qquad \Rightarrow \qquad
	  \frac{\partial^2 F}{\partial s \partial \beta}(\theta,\beta,s_\theta^\beta)
	  + \frac{\partial^2 F}{\partial s^2}(\theta,\beta,s_\theta^\beta) \cdot \frac{\partial s_\theta^\beta}{\partial \beta}
	  = 0.
	\end{equation}
	Using again the chain rule of differentiation and Eq.~\ref{eqn:d-dbeta},
	the transpose of the left-hand side of Eq.~\ref{eq:thm-deterministic} can be rewritten
	\begin{align}
	  \frac{d}{d \theta} \frac{\partial F}{\partial \beta} \left( \theta,\beta,s_\theta^\beta \right)
	  & = \frac{\partial^2 F}{\partial \theta \partial \beta} \left( \theta,\beta,s_\theta^\beta \right)
	  + \left( \frac{\partial s_\theta^\beta}{\partial \theta} \right)^T \cdot \frac{\partial^2 F}{\partial s \partial \beta}(\theta,\beta,s_\theta^\beta) \\
	  & = \frac{\partial^2 F}{\partial \theta \partial \beta} \left( \theta,\beta,s_\theta^\beta \right)
	  - \left( \frac{\partial s_\theta^\beta}{\partial \theta} \right)^T \cdot \frac{\partial^2 F}{\partial s^2}(\theta,\beta,s_\theta^\beta) \cdot \frac{\partial s_\theta^\beta}{\partial \beta}.
	  \label{eqn:beta}
	\end{align}
	Similarly we differentiate the fixed point equation Eq.~\ref{eqn:fx-pt-eq} with respect to $\theta$:
	\begin{equation}
	  \label{eqn:d-dtheta}
	  \frac{d}{d\theta} \; (\ref{eqn:fx-pt-eq}) \qquad \Rightarrow \qquad
	  \frac{\partial^2 F}{\partial s \partial \theta}(\theta,\beta,s_\theta^\beta)
	  +
	  \frac{\partial^2 F}{\partial s^2}(\theta,\beta,s_\theta^\beta) \cdot \frac{\partial s_\theta^\beta}{\partial \theta}
	  = 0.
	\end{equation}
	and obtain the following form for the right-hand side of Eq.~\ref{eq:thm-deterministic}:
	\begin{align}
		\frac{d}{d \beta} \frac{\partial F}{\partial \theta}(\theta,\beta,s_\theta^\beta)
		& = \frac{\partial^2 F}{\partial \beta \partial \theta}(\theta,\beta,s_\theta^\beta)
	+ \left( \frac{\partial s_\theta^\beta}{\partial \beta} \right)^T \cdot \frac{\partial^2 F}{\partial s \partial \theta}(\theta,\beta,s_\theta^\beta) \\
		& = \frac{\partial^2 F}{\partial \beta \partial \theta}(\theta,\beta,s_\theta^\beta)
	- \left( \frac{\partial s_\theta^\beta}{\partial \beta} \right)^T \cdot \frac{\partial^2 F}{\partial s^2}(\theta,\beta,s_\theta^\beta) \cdot \frac{\partial s_\theta^\beta}{\partial \theta}.
		\label{eqn:theta}
	\end{align}
	Clearly Eq.~\ref{eqn:beta} is the transpose of Eq.~\ref{eqn:theta}. Hence the result.
\end{proof}

We have just proved that
\begin{equation}
  \frac{d}{d \theta} \frac{\partial F}{\partial \beta} \left( \theta,\beta,s_\theta^\beta \right)
  = \frac{\partial^2 F}{\partial \theta \partial \beta} \left( \theta,\beta,s_\theta^\beta \right)
  - \left( \frac{\partial s_\theta^\beta}{\partial \theta} \right)^T \cdot
  \frac{\partial^2 F}{\partial s^2} \left( \theta,\beta,s_\theta^\beta \right) \cdot \frac{\partial s_\theta^\beta}{\partial \beta}.
\end{equation}
It is worth mentioning that we can show with similar arguments that
\begin{equation}
  \frac{d}{d \theta} \frac{\partial F}{\partial \beta} \left( \theta,\beta,s_\theta^\beta \right)
  = \frac{\partial^2 F}{\partial \theta \partial \beta} \left( \theta,\beta,s_\theta^\beta \right)
  - \frac{\partial^2 F}{\partial \theta \partial s} \left( \theta,\beta,s_\theta^\beta \right) \cdot
  \left( \frac{\partial^2 F}{\partial s^2} \left( \theta,\beta,s_\theta^\beta \right) \right)^{-1}
  \cdot \frac{\partial^2 F}{\partial s \partial \beta} \left( \theta,\beta,s_\theta^\beta \right).
\end{equation}
Now we show that the gradient formula (Theorem \ref{thm:deterministic})
naturally arises from Lemma \ref{lemma:deterministic}.

\begin{proof}[Proof of Theorem \ref{thm:deterministic}]
	According to Lemma \ref{lemma:deterministic} we have
	\begin{equation}
		\label{eq:learn-alg}
		\left( \frac{d}{d\theta} \frac{\partial F}{\partial \beta} \left( \theta,\v,\beta,s_{\theta,\v}^\beta \right) \right)^T
		= \frac{d}{d \beta} \frac{\partial F}{\partial \theta} \left( \theta,\v,\beta,s_{\theta,\v}^\beta \right).
	\end{equation}
	Recall that the objective function is defined as
	\begin{equation}
		J(\theta,\v)
		= C \left( \theta,\v,s_{\theta,\v}^0 \right)
		= \frac{\partial F}{\partial \beta} \left( \theta,\v,0,s_{\theta,\v}^0 \right).
	\end{equation}
	Thus, for $\beta=0$, the left-hand side of Eq.~\ref{eq:learn-alg} represents the gradient of the objective function with respect to $\theta$:
	\begin{equation}
		\label{eq:LHS}
		\frac{\partial J}{\partial \theta}(\theta,\v).
	\end{equation}
	On the other hand, the right-hand side of Eq.~\ref{eq:learn-alg} represents the derivative of the function
	\begin{equation}
		\beta \mapsto \frac{\partial F}{\partial \theta} \left( \theta,\v,\beta,s_{\theta,\v}^\beta \right),
	\end{equation}
	which, for $\beta=0$, can be rewritten
	\begin{equation}
	\label{eq:RHS}
	\lim_{\beta \to 0} \frac{1}{\beta} \left( \frac{\partial F}{\partial \theta} \left( \theta,\v,\beta,s_{\theta,\v}^\beta \right) - \frac{\partial F}{\partial \theta} \left( \theta,\v,0,s_{\theta,\v}^0 \right) \right).
	\end{equation}
	Therefore, combining Eq.\ref{eq:LHS} and Eq.~\ref{eq:RHS} we get the desired result.
\end{proof}

Finally, we prove Proposition \ref{prop:deterministic}.

\begin{proof}[Proof of Proposition \ref{prop:deterministic}]
	As in Lemma \ref{lemma:deterministic}, we omit to write the dependence on $\v$ in the notations.
	Multiplying both sides of Eq.~\ref{eqn:d-dbeta} on the left by $\left( \frac{\partial s_\theta^\beta}{\partial \beta} \right)^T$, we get
	\begin{equation}
	\left( \frac{\partial s_\theta^\beta}{\partial \beta} \right)^T \cdot \frac{\partial^2 F}{\partial s \partial \beta}(\theta,\beta,s_\theta^\beta)
	= - \left( \frac{\partial s_\theta^\beta}{\partial \beta} \right)^T \cdot \frac{\partial^2 F}{\partial s^2}(\theta,\beta,s_\theta^\beta) \cdot \frac{\partial s_\theta^\beta}{\partial \beta} \leq 0.
	\end{equation}
	This inequality holds because $\frac{\partial^2 F}{\partial s^2} \left( \theta,\beta,s_\theta^\beta \right)$ is positive definite
	as $s_\theta^\beta$ is a local minimum of $F$.
	In particular, for $\beta=0$, we get
	\begin{equation}
		\label{eq:prop-det-ineq}
		\left( \left. \frac{\partial s_\theta^\beta}{\partial \beta} \right|_{\beta=0} \right)^T \cdot
		\frac{\partial C}{\partial s} \left(\theta,s_\theta^0 \right) \leq 0.
	\end{equation}
	Here we have used the fact that $C \left(\theta,s \right) = \frac{\partial F}{\partial \beta}\left(\theta,0,s \right)$
	when the value of $\beta$ is set to $0$.
	Using the chain rule, we see that the left-hand side of Eq.~\ref{eq:prop-det-ineq} is the derivative of
	\begin{equation}
		\beta \mapsto C \left( \theta,s_\theta^\beta \right)
	\end{equation}
	at the point $\beta = 0$.	
	Hence the result.
\end{proof}


\section{Reformulation of the Training Objective as a Constrained Optimization Problem}
\label{appendix:constrained-optimization}

Here we give another proof for the gradient formula (Theorem \ref{thm:deterministic}).
Considering $\v$ as fixed,
and regarding $\theta$ and $s$ as the free parameters,
we can frame the training objective (for a single training example $\v$) as the following constrained optimization problem:
\begin{align}
	\text{find}       & \quad \underset{\theta,s}{\arg \min} \; C(\theta,\v,s) \\
	\text{subject to} & \quad \frac{\partial E}{\partial s}(\theta,\v,s) = 0. \label{eq:constraint}
\end{align}
Note that in more conventional machine learning algorithms, one only optimizes $\theta$,
since the prediction is an {\em explicit} function of $\theta$.
Here on the contrary, in the context of constrained optimization,
the state $s$ is regarded as belonging to the set of free parameters that
we optimize because the prediction is an
{\em implicit} function of $\theta$ through the constraint Eq.~\ref{eq:constraint}.

As usual for constrained optimization problems, we introduce the Lagrangian
\begin{equation}
	\label{eq:L}
	L(\theta,s,\lambda) := C(\theta,\v,s) + \lambda \cdot \frac{\partial E}{\partial s}(\theta,\v,s)
\end{equation}
where $\lambda$ is the vector of Lagrange multipliers.
We have omitted the explicit dependence on the data point $\v$ in the notation $L(\theta,s,\lambda)$,
since this variable is considered fixed.
Starting from the current parameter $\theta$,
we first find $s^*$ and $\lambda^*$ such that
\begin{equation}
	\label{eq:dL-dlambda=0}
  \frac{\partial L}{\partial \lambda} \left( \theta,s^*,\lambda^* \right) = 0
\end{equation}
and
\begin{equation}
	\label{eq:dL-ds=0}
  \frac{\partial L}{\partial s} \left( \theta,s^*,\lambda^* \right) = 0,
\end{equation}
and then we do one step of gradient descent on $L$ with respect
to $\theta$, that is
\begin{equation}
  \label{eq:update-theta}
  \Delta \theta \propto - \frac{\partial L}{\partial \theta} \left( \theta,s^*,\lambda^* \right).
\end{equation}
The first condition (Eq.~\ref{eq:dL-dlambda=0}) gives
\begin{equation}
  \frac{\partial E}{\partial s} \left( \theta,\v,s^* \right) = 0 \quad \Rightarrow \quad s^* = s_{\theta,\v}^0.
\end{equation}
Thus $s^*$ is the free fixed point.
Injecting this into the second condition (Eq.~\ref{eq:dL-ds=0}) we get
\begin{equation}
	\label{eq:lambda-star}
	\frac{\partial C}{\partial s} \left( \theta,\v,s_{\theta,\v}^0 \right) + \lambda^* \cdot \frac{\partial^2 E}{\partial s^2} \left( \theta,\v,s_{\theta,\v}^0 \right) = 0.
\end{equation}

To solve this equation for $\lambda^*$, we introduce the total energy $F$
and use the definition of $C$ and $E$ in terms of $F$ (Eq.~\ref{eq:E-C-F}).
We get
\begin{equation}
	\label{eq:lambda-star-2}
  \frac{\partial^2 F}{\partial \beta \partial s} \left( \theta,\v,0,s_{\theta,\v}^0 \right) + \lambda^* \cdot \frac{\partial^2 F}{\partial s^2} \left( \theta,\v,0,s_{\theta,\v}^0 \right) = 0.
\end{equation}
Comparing Eq.~\ref{eq:lambda-star-2} and the transpose of Eq.~\ref{eqn:d-dbeta} (evaluated at the point $\beta=0$),
and using the fact that $\frac{\partial^2 F}{\partial s^2} \left( \theta,\v,0,s_{\theta,\v}^0 \right)$ is invertible
(it is positive definite since $s_{\theta,\v}^0$ is a local minimum of $s \mapsto F(\theta,\v,0,s)$),
we conclude that
\begin{equation}
  \lambda^* = \left( \left. \frac{\partial s_\theta^\beta}{\partial \beta} \right|_{\beta=0} \right)^T,
\end{equation}
which is the derivative of the fixed point with respect to $\beta$.
Finally, using the values of $s^*$ and $\lambda^*$, and rewritting the Lagrangian (Eq.~\ref{eq:L}) in the form
\begin{equation}
	L(\theta,s,\lambda) = \frac{\partial F}{\partial \beta}(\theta,\v,0,s) + \lambda \cdot \frac{\partial F}{\partial s}(\theta,\v,0,s),
\end{equation}
we can compute the gradient of the Lagrangian:
\begin{align}
  \frac{\partial L}{\partial \theta}(\theta,s^*,\lambda^*)
  & = \frac{\partial^2 F}{\partial \beta \partial \theta}(\theta,\v,0,s^*) + \lambda^* \cdot \frac{\partial^2 F}{\partial s \partial \theta}(\theta,\v,0,s^*) \\
  & = \frac{\partial^2 F}{\partial \beta \partial \theta}(\theta,\v,0,s_{\theta,\v}^0) + \left( \left. \frac{\partial s_\theta^\beta}{\partial \beta} \right|_{\beta=0} \right)^T \cdot \frac{\partial^2 F}{\partial s \partial \theta}(\theta,\v,0,s_{\theta,\v}^0) \\
  & = \left. \frac{d}{d\beta} \right|_{\beta=0} \frac{\partial F}{\partial \theta}(\theta,\v,\beta,s_{\theta,\v}^\beta).
\end{align}
Therefore Eq.~\ref{eq:update-theta} can be rewritten
\begin{equation}
  \Delta \theta \propto - \lim_{\beta \to 0} \frac{1}{\beta} \left( \frac{\partial F}{\partial \theta} \left( \theta,\v,\beta,s_{\theta,\v}^\beta \right)
  - \frac{\partial F}{\partial \theta} \left( \theta,\v,0,s_{\theta,\v}^0 \right) \right).
\end{equation}


\section{Stochastic Framework}
\label{appendix:stochastic-framework}

In this section we present a stochastic framework that naturally extends the deterministic
framework studied in the paper.
The analysis presented here could be the basis for a machine learning framework for spiking networks ~\citep{Mesnard2016}.

Rather than the deterministic dynamical system Eq.~\ref{eq:gradient-system},
a more likely dynamics would include some form of noise.
As suggested by ~\citet{Bengio-arxiv2015},
injecting Gaussian noise in the gradient system Eq.~\ref{eq:gradient-system}
leads to a Langevin dynamics, which we write as the following stochastic differential equation:
\begin{equation}
	\label{eq:sde}
	ds = -\frac{\partial F}{\partial s}(\theta,\v,\beta,s)dt + \sigma dB(t),
\end{equation}
where $B(t)$ is a standard Brownian motion of dimension $\dim(s)$.
In addition to the force $-\frac{\partial F}{\partial s}(\theta,\v,\beta,s)dt$,
the Brownian term $\sigma dB(t)$ models some form of noise in the network.
For fixed $\theta$, $\v$ and $\beta$, the Langevin dynamics Eq.~\ref{eq:sde} is known
to converge to the Boltzmann distribution with temperature $T=\frac{1}{2}\sigma^2$
(consequence of the Fokker-Planck equation, a.k.a. Kolmogorov forward equation).
For simplicity, here we assume that $\sigma=\sqrt{2}$, so that $T=1$.

Let us denote by $p_{\theta,\v}^\beta$ the Boltzmann distribution
corresponding to the energy function $F$. It is defined by
\begin{equation}
	\label{eq:beta-distribution}
  p_{\theta,\v}^\beta(s) := \frac{e^{-F(\theta,\v,\beta,s)}}{Z_{\theta,\v}^\beta},
\end{equation}
where $Z_{\theta,\v}^\beta$ is the partition function
\begin{equation}
	Z_{\theta,\v}^\beta(s) := \int e^{-F(\theta,\v,\beta,s)}ds.
\end{equation}

After running the dynamics Eq.~\ref{eq:sde} for long enough,
we can obtain a sample $s^0$ from the stationary distribution $p_{\theta,\v}^0$ in the free phase (with $\beta = 0$)
and similarly we can obtain a sample $s^\beta$ from $p_{\theta,\v}^\beta$ in the weakly-clamped phase (with $\beta > 0$).
Theorem \ref{thm2:stochastic} below will show that
$\frac{1}{\beta} \left( \frac{\partial F}{\partial \theta} \left( \theta,\v,\beta,s^\beta \right)
- \frac{\partial F}{\partial \theta} \left( \theta,\v,0,s^0 \right) \right)$
is an unbiased estimator of the gradient of the following objective function:
\begin{equation}
	\widetilde{J}(\theta,\v) := \E_{\theta,\v}^0 \left[ C(\theta,\v,s) \right].
\end{equation}
Here $\E_{\theta,\v}^\beta$ denotes the expectation over $s \sim p_{\theta,\v}^\beta(s)$
and $C$ is the cost function.

\begin{theo}[Stochastic version]
	\label{thm2:stochastic}
	\begin{equation}
		\frac{\partial \widetilde{J}}{\partial \theta} \left( \theta,\v \right)
		= \lim_{\beta \to 0} \frac{1}{\beta} \left( \E_{\theta,\v}^\beta \left[ \frac{\partial F}{\partial \theta} \left( \theta,\v,\beta,s \right) \right]
		- \E_{\theta,\v}^0 \left[ \frac{\partial F}{\partial \theta} \left( \theta,\v,0,s \right) \right] \right),
	\end{equation}
\end{theo}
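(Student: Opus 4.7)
The plan is to mimic the concise proof of Lemma~\ref{lemma:deterministic}, replacing the ``energy at the fixed point'' function $G(\theta,\beta)=F(\theta,\beta,s_\theta^\beta)$ used there by its stochastic analogue, the negative log-partition function (with $\v$ held fixed throughout)
\[
G(\theta,\beta) := -\log Z_{\theta,\v}^\beta = -\log \int e^{-F(\theta,\v,\beta,s)}\, ds.
\]
A standard exponential-family computation (differentiating under the integral sign) expresses both first partial derivatives of $G$ as Boltzmann expectations:
\[
\frac{\partial G}{\partial \theta}(\theta,\beta) = \E_{\theta,\v}^\beta\left[\frac{\partial F}{\partial \theta}(\theta,\v,\beta,s)\right], \qquad \frac{\partial G}{\partial \beta}(\theta,\beta) = \E_{\theta,\v}^\beta\left[\frac{\partial F}{\partial \beta}(\theta,\v,\beta,s)\right].
\]
Since $F = E + \beta C$ makes $\frac{\partial F}{\partial \beta} = C$ identically, evaluating the second identity at $\beta = 0$ recovers the objective: $\widetilde J(\theta,\v) = \E_{\theta,\v}^0[C(\theta,\v,s)] = \frac{\partial G}{\partial \beta}(\theta,0)$.

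Differentiating this equality with respect to $\theta$ gives $\frac{\partial \widetilde J}{\partial \theta}(\theta,\v) = \frac{\partial^2 G}{\partial \theta\, \partial \beta}(\theta,0)$. Assuming $G$ is $C^2$, Schwarz's theorem on equality of mixed partials lets me swap the order of differentiation to obtain $\frac{\partial^2 G}{\partial \beta\, \partial \theta}(\theta,0)$, which by the first identity above is the derivative at $\beta = 0$ of the map $\beta \mapsto \E_{\theta,\v}^\beta\left[\frac{\partial F}{\partial \theta}(\theta,\v,\beta,s)\right]$. Writing this derivative as a limit of finite differences around $\beta = 0$ yields exactly the right-hand side of the theorem. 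The structural parallel with Lemma~\ref{lemma:deterministic} is pleasing: the role played there by the stationarity condition $\frac{\partial F}{\partial s}(\theta,\beta,s_\theta^\beta)=0$, which made the implicit dependence of $G$ on $s_\theta^\beta$ cancel, is played here automatically by the $\log Z$ manipulation, which eliminates the normalisation constant.

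The main obstacle is analytic rather than algebraic: I need to justify the two interchanges of differentiation and integration used to compute $\partial G/\partial \theta$ and $\partial G/\partial \beta$, and then verify the smoothness required for Schwarz's theorem. Mild regularity suffices, namely $F$ jointly $C^2$ in $(\theta,\beta,s)$, growth in $s$ strong enough that $e^{-F}$ together with all relevant partial derivatives are dominated by a common integrable function uniformly on a neighbourhood of $(\theta,0)$, and finiteness of the expectations $\E_{\theta,\v}^\beta\left[\frac{\partial F}{\partial \theta}\right]$ and $\E_{\theta,\v}^\beta[C]$ there. Under these conditions $G$ is automatically $C^2$ and the clean algebraic derivation above carries through.
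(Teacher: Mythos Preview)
Your proposal is correct and follows essentially the same approach as the paper: the paper proves the result via Lemma~\ref{thm:stochastic}, whose proof also introduces the log-partition function, computes its $\theta$- and $\beta$-derivatives as the expectations $-\E_\theta^\beta[\partial F/\partial \theta]$ and $-\E_\theta^\beta[\partial F/\partial \beta]$, and then invokes equality of mixed partials. You have simply merged the lemma and its application into a single argument and added the regularity discussion that the paper omits.
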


Theorem \ref{thm2:stochastic} generalizes Theorem \ref{thm:deterministic} to the stochastic framework
and is a consequence of Lemma \ref{thm:stochastic} below
(which itself is a generalization of Lemma \ref{lemma:deterministic}).

\begin{lem}[Stochastic version]
	\label{thm:stochastic}
	Let $F(\theta,\beta,s)$ be a twice differentiable function and $p_\theta^\beta$ the Boltzmann distribution defined by
	\begin{equation}
		p_\theta^\beta(s) := \frac{e^{-F(\theta,\beta,s)}}{Z_\theta^\beta},
	\end{equation}
	where $Z_\theta^\beta$ is the partition function
	\begin{equation}
		Z_\theta^\beta(s) := \int e^{-F(\theta,\beta,s)},
	\end{equation}
	and $\E_\theta^\beta$ the expectation over $s \sim p_\theta^\beta(s)$. Then we have
	\begin{equation}
		\label{eq:thm-stochastic}
		  \left( \frac{d}{d \theta} \E_\theta^\beta \left[ \frac{\partial F}{\partial \beta}(\theta,\beta,s) \right] \right)^T
		= \frac{d}{d \beta} \E_\theta^\beta \left[ \frac{\partial F}{\partial \theta}(\theta,\beta,s) \right].
	\end{equation}
	As in Theorem \ref{lemma:deterministic}, the variables $\theta$ and $\beta$ play symmetric roles in Eq.~\ref{eq:thm-stochastic}.
\end{lem}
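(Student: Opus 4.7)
The plan is to mirror the deterministic proof of Lemma \ref{lemma:deterministic}, replacing the pointwise value $G(\theta,\beta) = F(\theta,\beta,s_\theta^\beta)$ by its natural stochastic analogue, the ``free-energy'' function
\begin{equation}
\Phi(\theta,\beta) \;:=\; -\log Z_\theta^\beta \;=\; -\log \int e^{-F(\theta,\beta,s)}\,ds.
\end{equation}
The appeal of $\Phi$ is that, just like $G$, it is a scalar function of $(\theta,\beta)$ alone --- there is no implicit auxiliary quantity analogous to $s_\theta^\beta$ to track --- so once I have it in hand, the identity I am after should follow from the symmetry of its mixed second partials,
\begin{equation}
\left( \frac{\partial^2 \Phi}{\partial \theta\,\partial \beta}(\theta,\beta) \right)^T = \frac{\partial^2 \Phi}{\partial \beta\,\partial \theta}(\theta,\beta).
\end{equation}

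The substantive step is then to identify the two sides of Eq.~\ref{eq:thm-stochastic} with the two iterated second partials of $\Phi$. Differentiating the definition of $\Phi$ under the integral sign,
\begin{equation}
\frac{\partial \Phi}{\partial \beta}(\theta,\beta) = \frac{1}{Z_\theta^\beta}\int \frac{\partial F}{\partial \beta}(\theta,\beta,s)\,e^{-F(\theta,\beta,s)}\,ds = \E_\theta^\beta\!\left[ \frac{\partial F}{\partial \beta}(\theta,\beta,s) \right],
\end{equation}
and the completely analogous computation yields $\frac{\partial \Phi}{\partial \theta}(\theta,\beta) = \E_\theta^\beta\!\left[ \frac{\partial F}{\partial \theta}(\theta,\beta,s) \right]$. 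Applying $\frac{d}{d\theta}$ to the first identity and $\frac{d}{d\beta}$ to the second then converts the mixed-partial symmetry above directly into Eq.~\ref{eq:thm-stochastic}, with the transpose on the left arising (as in Lemma \ref{lemma:deterministic}) from the vector nature of $\theta$.

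The main obstacle in a rigorous write-up is the interchange of differentiation and integration --- once in passing from $\partial_\beta \log Z_\theta^\beta$ and $\partial_\theta \log Z_\theta^\beta$ to the expectations above, and once more when taking the outer derivative. Each swap needs domination of $\partial_\beta e^{-F}$, $\partial_\theta e^{-F}$ and their relevant second derivatives by integrable functions on a neighbourhood of $(\theta,\beta)$; these hypotheses are the stochastic counterpart of the implicit-function-theorem regularity invoked in the deterministic lemma. Under them, dominated convergence supplies the required swaps and the argument closes. It is worth noting that, unlike the deterministic proof which had to carry along the sensitivities $\partial s_\theta^\beta/\partial \theta$ and $\partial s_\theta^\beta/\partial \beta$ and then cancel them via the differentiated fixed-point equations Eq.~\ref{eqn:d-dbeta} and Eq.~\ref{eqn:d-dtheta}, here the entire fixed-point machinery is absorbed into the Gibbs form $p_\theta^\beta \propto e^{-F}$, which automatically produces the ``right'' cancellations through the logarithmic derivative.
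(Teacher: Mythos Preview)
Your proposal is correct and follows essentially the same route as the paper: both compute the first partials of the log partition function (you use $\Phi=-\log Z_\theta^\beta$, the paper uses $\log Z_\theta^\beta$ directly, a harmless sign difference) to identify them with the two expectations, and then invoke equality of mixed second partials to conclude. Your added discussion of the dominated-convergence hypotheses needed to justify differentiating under the integral is a nice bit of rigor that the paper leaves implicit.
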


\begin{proof}[Proof of Lemma \ref{thm:stochastic}]
	The differentials of the log partition function are equal to
	\begin{equation}
		\label{eq:dlogZ-dbeta}
		\frac{d}{d \beta} \ln \left( Z_\theta^\beta \right)
		= - \E_\theta^\beta \left[ \frac{\partial F}{\partial \beta} \left( \theta, \beta, s \right) \right]
	\end{equation}
	and
	\begin{equation}
		\frac{d}{d \theta} \ln \left( Z_\theta^\beta \right)
		= - \E_\theta^\beta \left[ \frac{\partial F}{\partial \theta} \left( \theta, \beta, s \right) \right].
	\end{equation}
	Therefore
	\begin{equation}
		\left( \frac{d}{d\theta} \E_\theta^\beta \left[ \frac{\partial F}{\partial \beta} \left( \theta,\beta,s \right) \right] \right)^T
		= - \left( \frac{d}{d \theta} \frac{d}{d\beta} \ln \left( Z_\theta^\beta \right) \right)^T
		= - \frac{d}{d \beta} \frac{d}{d\theta} \ln \left( Z_\theta^\beta \right)
		= \frac{d}{d\beta} \E_\theta^\beta \left[ \frac{\partial F}{\partial \theta} \left( \theta,\beta,s \right) \right].
	\end{equation}
\end{proof}

Finally we state a result similar to Proposition \ref{prop:deterministic} in the stochastic framework,
which shows that for a small $\beta>0$ the 'nudged' distribution $p_{\theta,\v}^\beta$ is better than the 'free' distribution $p_{\theta,\v}^0$
in terms of expected cost.

\begin{prop}[Stochastic version]
	The derivative of the function
	\begin{equation}
		\label{eq:prop-sto}
		\beta \mapsto \E_{\theta,\v}^\beta \left[ C(\theta,\v,s) \right]
	\end{equation}
	at the point $\beta=0$ is non-positive.
\end{prop}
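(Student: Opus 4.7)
The plan is to compute the derivative of $\phi(\beta) := \E_{\theta,\v}^\beta[C(\theta,\v,s)]$ at $\beta = 0$ directly and recognize it as the negative of a variance, in the spirit of the fluctuation--response identities for exponential families.

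First, I would write $\phi(\beta) = \int C(\theta,\v,s)\, p_{\theta,\v}^\beta(s)\, ds$ and differentiate under the integral sign; this is permissible under the regularity on $F$ used elsewhere in the paper. From $p_{\theta,\v}^\beta = e^{-F(\theta,\v,\beta,s)}/Z_{\theta,\v}^\beta$ and the identity established in the proof of Lemma~\ref{thm:stochastic} (Eq.~\ref{eq:dlogZ-dbeta}), one has
\begin{equation*}
  \partial_\beta \ln p_{\theta,\v}^\beta(s) = -\frac{\partial F}{\partial \beta}(\theta,\v,\beta,s) + \E_{\theta,\v}^\beta\!\left[\frac{\partial F}{\partial \beta}\right].
\end{equation*}
Multiplying by $C(\theta,\v,s)\, p_{\theta,\v}^\beta(s)$ and integrating yields
\begin{equation*}
  \phi'(\beta) = -\operatorname{Cov}_{\theta,\v}^\beta\!\left(C(\theta,\v,s),\,\tfrac{\partial F}{\partial \beta}(\theta,\v,\beta,s)\right),
\end{equation*}
where the covariance is taken under $s \sim p_{\theta,\v}^\beta$.

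Next, I would specialize to $\beta = 0$. By Eq.~\ref{eq:E-C-F}, $C(\theta,\v,s) = \frac{\partial F}{\partial \beta}(\theta,\v,0,s)$, so at $\beta = 0$ the two arguments of the covariance coincide and it collapses to a variance:
\begin{equation*}
  \phi'(0) = -\operatorname{Var}_{\theta,\v}^0\!\bigl(C(\theta,\v,s)\bigr) \le 0,
\end{equation*}
which is the desired non-positivity.

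The one delicate point is this last collapse: in the general framework given by Eq.~\ref{eq:E-C-F}, $C$ and $\partial F/\partial \beta$ agree only at $\beta = 0$, so the covariance-to-variance reduction is specific to that point, which is precisely why the proposition is phrased as non-positivity of the derivative at $\beta = 0$ rather than as a monotonicity statement on an interval. (In the special case $F = E + \beta C$ of Eq.~\ref{eq:tot-en} the identification holds for all $\beta$, and one in fact obtains $\phi'(\beta) = -\operatorname{Var}_{\theta,\v}^\beta[C] \le 0$ everywhere.) Justifying the differentiation under the integral requires standard dominated-convergence bounds that follow from the regularity hypotheses on $F$ invoked elsewhere in the paper, so I do not anticipate any additional obstacle there.
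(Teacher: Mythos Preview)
Your proposal is correct and takes essentially the same approach as the paper: differentiate $\beta \mapsto \E_{\theta,\v}^\beta[C]$ under the integral, use the log-derivative identity for the Boltzmann density, and recognize the result at $\beta=0$ as $-\operatorname{Var}_{\theta,\v}^0[C] \le 0$. The paper's proof is a one-line statement of this computation, whereas you spell out the intermediate covariance step and the remark about the general framework (Eq.~\ref{eq:E-C-F}) versus the special case $F = E + \beta C$; both are welcome elaborations but not substantively different.
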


\begin{proof}
	The derivative of Eq.~\ref{eq:prop-sto} is
	\begin{equation}
		- \E_{\theta,\v}^0 \left[ \left( C(\theta,\v,s) \right)^2 \right]
		+ \left( \E_{\theta,\v}^0 \left[ C(\theta,\v,s) \right] \right)^2
		= - \text{Var}_{\theta,\v}^0 \left[ C(\theta,\v,s) \right]
		\leq 0,
	\end{equation}
	where $\text{Var}_{\theta,\v}^0$ represents the variance over $s \sim p_{\theta,\v}^0(s)$.
\end{proof}

\end{document}